\documentclass{article}

\usepackage{microtype}
\usepackage{graphicx}
\usepackage{subfig}
\usepackage{tabu}
\usepackage{booktabs}
\usepackage{enumitem}
\usepackage{amsmath, amsthm, amssymb}
\usepackage{tikz}
\usepackage{xcolor}

\usepackage{authblk}
\usepackage{fullpage}

\usepackage{hyperref}
 \usepackage{comment}
 \usepackage[utf8]{inputenc}
\usepackage[english]{babel}
\usepackage[numbers,sort&compress,square]{natbib}

\usepackage[ruled]{algorithm2e}
%

\usepackage{dsfont}

\usepackage{Definitions}
\newcommand{\xhdr}[1]{\vspace{2mm}\noindent{{\bf #1.}}}

\newcommand{\given}{{\,|\,}}

\newcommand\independent{\protect\mathpalette{\protect\independenT}{\perp}} 
\def\independenT#1#2{\mathrel{\rlap{$#1#2$}\mkern2mu{#1#2}}}

\newcommand\norm[1]{\left\lVert#1\right\rVert}

\newcommand\numberthis{\addtocounter{equation}{1}\tag{\theequation}}

\SetKwComment{Comment}{/* }{ */}
\SetKwComment{CommentLeft}{/* }{}
\SetKwComment{CommentRight}{}{ */}

\author{Stratis Tsirtsis}
\author{Manuel Gomez-Rodriguez}

\affil{Max Planck Institute for Software Systems\\ \{stsirtsis, manuelgr\}@mpi-sws.org}

\date{}
\title{Finding Counterfactually Optimal Action Sequences\\in Continuous State Spaces}

\begin{document}
\maketitle

\begin{abstract}
Whenever a clinician reflects on the efficacy of a sequence of treatment decisions for a patient, they may try to identify 
%
critical time steps where, had they made different decisions, the patient's health would have improved.
While recent methods at the intersection of causal inference and reinforcement learning promise to aid human experts,
as the clinician above, to \emph{retrospectively} analyze sequential decision making processes, they have focused on environments with finitely 
many discrete states.
%
However, in many practical applications, the state of the environment is inherently continuous in nature.
In this paper, we aim to fill this gap. 
We start by formally characterizing a sequence of discrete actions and continuous states using finite horizon Markov 
decision processes and a broad class of bijective structural causal models.
Building upon this characterization, we formalize the problem of finding counterfactually optimal action sequences 
and show that, in general, we cannot expect to solve it in polynomial time.
Then, we develop a search method based on the A$^{*}$ algorithm that, under a natural form of Lipschitz continuity of the environment'{}s dynamics, is guaranteed to return the optimal solution to the problem.
Experiments on real clinical data show that our method is very efficient in practice,
and it has the potential to offer interesting insights for sequential decision making tasks.
\end{abstract}

\section{Introduction}
\label{sec:introduction}
Had the chess player moved the king one round later, would they have avoided losing the game?
Had the physician administered antibiotics one day earlier, would the patient have recovered?
The process of mentally simulating alternative worlds where events of the past play out differently than they did in reality is known as counterfactual reasoning~\cite{byrne2016counterfactual}.
%
Thoughts of this type are a common by-product of human decisions and they are tightly connected to the way we attribute causality and responsibility to 
events and others' actions~\cite{lagnado2013causal}.
The last decade has seen a rapid development of reinforcement learning agents, presenting (close to) human-level performance in a variety of sequential 
decision making tasks, such as gaming~\cite{silver2016mastering,kaiser2019model}, autonomous driving~\cite{kiran2021deep} and clinical decision support~\cite{komorowski2018artificial, yu2021reinforcement}.
In conjunction with the substantial progress made in the field of causal inference~\cite{pearl2009causality, peters2017elements}, this has led to a growing 
interest in 
machine learning methods that employ elements of counterfactual reasoning
to improve or to retrospectively analyze decisions in sequential settings~\cite{buesing2018woulda, oberst2019counterfactual, lu2020sample, tsirtsis2021counterfactual, richens2022counterfactual, noorbakhsh2022counterfactual}.
%

In the context of reinforcement learning, sequential decision making is typically modeled using 
Markov Decision Processes (MDPs)~\cite{sutton2018reinforcement}.
Here, we consider MDPs with a finite horizon where each episode (\ie, each sequence of decisions) consists of a finite number 
of time steps.
%
%
As an example, consider a clinician 
treating a patient in an 
intensive care unit (ICU).
%
At each time step, 
the clinician observes the current state of the environment (\eg, the patient's vital signs) and they 
choose among a set of potential actions (\eg, standardized dosages of a drug).
Consequently, the chosen action causes the environment to transition (stochastically) into a new state, and the 
clinician earns a reward (\eg, satisfaction inversely proportional to the patient's severity).
The process repeats until the horizon is met and the goal of the 
clinician is to maximize the total reward.
%
%
%
%
%

In this work, our goal is to aid the retrospective analysis of individual episodes as the example above.
%
For each episode, we aim to find an action sequence that differs slightly from the one taken in reality but, under the circumstances of that particular 
episode, would have led to a higher counterfactual reward.
%
In our example above, assume that the patient's condition does not improve after a certain period of time.
A counterfactually optimal action sequence could highlight to the clinician a small set of time steps in the treatment process where, had they administered different drug dosages, the patient's severity would have been lower.
In turn, a manual inspection of those time steps could provide insights to the clinician about potential ways to improve their treatment policy.

To infer how a particular episode would have evolved under a different action sequence than the one taken in reality, one needs to represent the stochastic 
state transitions of the 
environment using a structural causal model (SCM)~\cite{pearl2009causality, bareinboim2022pearl}.
This has been a key aspect of a line of work at the intersection of counterfactual reasoning and reinforcement learning, which has focused on 
methods to either design better policies using offline data~\cite{buesing2018woulda,lu2020sample} or to retrospectively analyze individual episodes~\cite{oberst2019counterfactual,tsirtsis2021counterfactual}.
%
Therein, the work most closely related to ours is by Tsirtsis et al.~\cite{tsirtsis2021counterfactual}, which introduces a method to 
compute counterfactually optimal action sequences in MDPs with discrete states and actions using a Gumbel-Max SCM to model the environment 
dynamics~\cite{oberst2019counterfactual}.
However, in many practical applications, such as in critical care, the state of the environment is inherently continuous in nature~\cite{elliott2012critical}.
In our work, we aim to fill this gap by designing a method to compute counterfactually optimal action sequences in MDPs with continuous states and discrete actions.
%
Refer to Appendix~\ref{app:related} for a discussion of further related work and to Pearl~\cite{pearl2009causality} for an overview of the broad field of causality.

\xhdr{Our contributions}
%
%
We start by formally characterizing sequential decision making processes with continuous states and discrete actions using finite horizon MDPs and a 
general class of \emph{bijective} SCMs~\cite{nasr2023counterfactual}. 
%
Notably, this class of SCMs includes multiple models introduced in the causal discovery literature~\cite{hoyer2008nonlinear, zhang2012identifiability, immer2023identifiability, pawlowski2020deep, khemakhem2021causal, sanchez2022diffusion}.
Building on this characterization, we make the following contributions:
%
%
%
%
%
%
%
\begin{enumerate}[label=(\roman*),leftmargin=0.8cm,noitemsep,nolistsep]
%
    \item We formalize the problem of finding a counterfactually optimal action sequence for a particular episode in environments with continuous states 
    under the constraint that it differs from the observed action sequence in 
    at most $k$ actions.
    %
    \item We show that the above problem is NP-hard using a novel reduction from the classic partition problem~\cite{karp1972reducibility}. This is in contrast
    with the computational complexity of the problem in environments with discrete states, which allows for polynomial time 
    algorithms~\cite{tsirtsis2021counterfactual}. 
    \item We develop a search method based on the $A^*$ algorithm that, under a natural form of Lipschitz continuity of the environment's dynamics, is guaranteed to return the optimal solution to the problem upon termination. 
    %
\end{enumerate}
Finally, we evaluate the performance and the qualitative insights of our method by performing a series of experiments using real patient data from 
critical care.\footnote{
Our code is accessible at \href{https://github.com/Networks-Learning/counterfactual-continuous-mdp}{https://github.com/Networks-Learning/counterfactual-continuous-mdp}.
}

\section{A Causal Model of Sequential Decision Making Processes} 
\label{sec:counterfactuals}
At each time step
$t\in[T-1]:=\{0,1,\ldots,T-1\}$,
where $T$ is a time horizon, the decision making process is characterized by a $D$-dimensional vector state $\sbb_t \in \Scal = \RR^D$, an action $a_t \in \Acal$, where $\Acal$ is a finite set of $N$ actions, and a reward $R(\sbb_t, a_t)\in\RR$ associated with each pair of states and actions.
Moreover, given an episode of the decision making process, $\tau = \left\{\left(\sbb_t, a_t\right)\right\}_{t=0}^{T-1}$, the process'{}s outcome $o(\tau)=\sum_t R(\sbb_t, a_t)$ is given by the sum of the rewards.
In the remainder, we will denote the elements of a vector $\sbb_t$ as $s_{t,1}, \ldots, s_{t,D}$.\footnote{
Table~\ref{tab:notation} in Appendix~\ref{app:notation} summarizes the notation used throughout the paper.} 

Further, we characterize the dynamics of the decision making process 
using the framework of structural causal models (SCMs).
In general, 
an SCM is consisted of four parts: (i) a set of endogenous variables (ii) a set of exogenous (noise) variables (iii) a set of structural equations assigning values to the endogenous variables, and (iv) a set of prior distributions characterizing the exogenous variables~\cite{pearl2009causality}. 
In our setting, the endogenous variables of the SCM $\Ccal$ are the random variables representing the states $\Sbb_0, \ldots, \Sbb_{T-1}$ and the actions $A_0, \ldots, A_{T-1}$.
The action $A_t$ at time step $t$ is chosen based on the observed state $\Sbb_t$ and is given by a structural (policy) equation
\begin{equation}\label{eq:scm_policy}
  A_t := g_A(\Sbb_t, \Zb_t),
\end{equation}
where $\Zb_t\in\Zcal$ is a vector-valued noise variable, to allow some level of stochasticity in the choice of the action, and its prior distribution $P^\Ccal (\Zb_t)$ is characterized by a density function $f_{\Zb_t}^\Ccal$.
Similarly, the state $\Sbb_{t+1}$ in the next time step is given by a structural (transition) equation
\begin{equation}\label{eq:scm_trans}
  \Sbb_{t+1} := g_S(\Sbb_t, A_t, \Ub_t),  
\end{equation}
where $\Ub_t\in\Ucal$ is a vector-valued noise variable with its prior distribution $P^\Ccal(\Ub_t)$ having a density function $f_{\Ub_t}^\Ccal$, and we refer to the function $g_S$ as the \emph{transition mechanism}.
Note that, in Eq.~\ref{eq:scm_trans}, the noise variables $\{\Ub_t\}_{t=0}^{T-1}$ are mutually independent and, keeping the sequence of actions fixed, they are the only source of stochasticity in the dynamics of the environment.
In other words, a sampled sequence of noise values $\{\ub_t\}_{t=0}^{T-1}$ and a fixed sequence of actions $\{a_t\}_{t=0}^{T-1}$ result into a single (deterministic) sequence of states $\{\sbb_t\}_{t=0}^{T-1}$.
This implicitly assumes that the state transitions are stationary and there are no unobserved confounders.
%
Figure~\ref{fig:dag} depicts the causal graph $G$ corresponding to the SCM $\Ccal$ defined above.

\begin{figure}[t]
  \centering
  \includegraphics[width=0.6\textwidth]{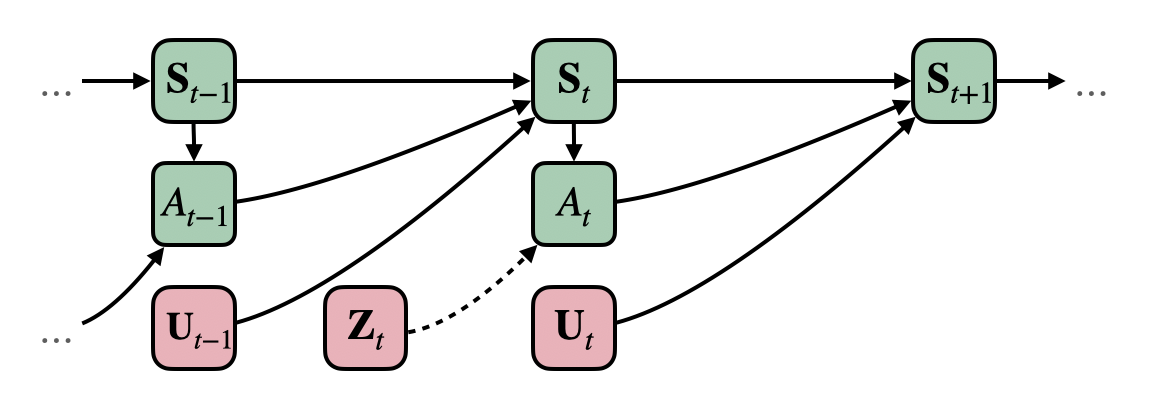}  
  \caption{
    Directed acyclic graph $G$ of the SCM $\Ccal$, modeling a sequential decision making process (figure adapted from~\cite{tsirtsis2021counterfactual}).
    Green boxes represent endogenous random variables and red boxes represent exogenous noise variables.
    All exogenous random variables are root nodes of the graph, and each one is independently sampled from its respective prior distribution.
    Each endogenous variable is an effect of its ancestors in the graph $G$, and takes its value based on Eqs.~\ref{eq:scm_policy} and~\ref{eq:scm_trans}.
    An intervention $do(A_t = a)$ breaks the dependence of the variable $A_t$ from its ancestors (highlighted by dotted lines) and sets its value to $a$.
    After observing an event $\Sbb_{t+1} = \sbb_{t+1}, \Sbb_t = \sbb_t, A_t = a_t$, a counterfactual prediction corresponds to an intervention $do(A_t = a)$ in a modified SCM where $\Ub_t$ takes values $\ub_t$ from a posterior distribution with support such that $\sbb_{t+1} = g_S(\sbb_t, a_t, \ub_t)$. 
    }
\label{fig:dag}
\end{figure}

The above representation of sequential decision making using an SCM $\Ccal$ is a more general reformulation of a Markov decision process, where a (stochastic) policy $\pi(a\given \sbb)$ is entailed by Eq.~\ref{eq:scm_policy}, and the transition distribution (\ie, the conditional distribution of $\Sbb_{t+1}\given\Sbb_t,A_t$) is entailed by Eq.~\ref{eq:scm_trans}.
Specifically, the conditional density function of $\Sbb_{t+1}\given\Sbb_t,A_t$ is given by
\begin{multline}\label{eq:interventional}
  p^\Ccal(\Sbb_{t+1} = \sbb \given \Sbb_t=\sbb_t, A_t=a_t) = p^{\Ccal \,;\, do(A_t=a_t)}(\Sbb_{t+1}=\sbb \given \Sbb_t=\sbb_t) \\
  = \int_{\ub \in \Ucal} \mathds{1}[\sbb = g_S(\sbb_t, a_t, \ub)] \cdot f_{\Ub_t}^{\Ccal}(\ub) d\ub,
\end{multline}
where $do(A_t=a_t)$ denotes a (hard) intervention on the variable $A_t$, whose value is set to $a_t$.\footnote{
In general, the $do$ operator also allows for soft interventions (\ie, setting a probability distribution for $A_t$).
}
%
Here, the first equality holds because $\Sbb_{t+1}$ and $A_t$ are d-separated by $\Sbb_t$ in the sub-graph obtained from $G$ after removing all outgoing 
edges of $A_t$\footnote{This follows directly from the rules of $do$-calculus. For further details, refer to Chapter $3$ of Pearl~\cite{pearl2009causality}.}
%
and the second equality follows from Eq.~\ref{eq:scm_trans}.

Moreover, as argued elsewhere~\cite{oberst2019counterfactual, tsirtsis2021counterfactual}, by using an SCM to represent sequential decision making, instead of a classic MDP, we can answer counterfactual questions. 
More specifically, assume that, at time step $t$, we observed the state $\Sbb_t=\sbb_t$, we took action $A_t=a_t$ and the next state was $\Sbb_{t+1}=\sbb_{t+1}$.
Retrospectively, we would like to know the probability that the state $\Sbb_{t+1}$ would have been $\sbb'$ if, at time step $t$, we had been in a state $\sbb$, and we had taken an action $a$, (generally) different from $\sbb_t, a_t$.
Using the SCM $\Ccal$, we can characterize this by a counterfactual transition density function
\begin{multline}\label{eq:counterfactual}
  p^{\Ccal \given \Sbb_{t+1}=\sbb_{t+1}, \Sbb_t=\sbb_t, A_t=a_t \,;\, do(A_t=a)}(\Sbb_{t+1} = \sbb' \given \Sbb_t=s) = \\
  \qquad \int_{\ub \in \Ucal} \mathds{1}[\sbb' = g_S(\sbb, a, \ub)] \cdot f^{\Ccal \given \Sbb_{t+1}=\sbb_{t+1}, \Sbb_t=\sbb_t, A_t=a_t}_{\Ub_t}(\ub) d\ub,
\end{multline}
where $f^{\Ccal \given \Sbb_{t+1}=\sbb_{t+1}, \Sbb_t=\sbb_t, A_t=a_t}_{\Ub_t}$ is the posterior distribution of the noise variable $\Ub_t$ with support such that $\sbb_{t+1}=g_S(\sbb_t, a_t, \ub)$.

In what follows, we will assume that the transition mechanism $g_S$ is continuous with respect to its last argument and the SCM $\Ccal$ satisfies the following form of Lipschitz-continuity:
%
%
\begin{definition}\label{def:lipschitz_scm}
An SCM $\Ccal$ is Lipschitz-continuous iff the transition mechanism $g_S$ and the reward $R$ are Lipschitz-continuous with respect to their first argument,
\ie, for each $a\in\Acal$, $\ub\in \Ucal$, there exists a Lipschitz constant $K_{a,\ub}\in\RR_+$ such that, for any $\sbb, \sbb'\in\Scal$, $\norm{g_S(\sbb, a, \ub) - g_S(\sbb', a, \ub) } \leq K_{a, \ub} \norm{\sbb - \sbb'}$, 
and,
for each $a\in\Acal$, there exists a Lipschitz constant $C_a\in\RR_+$ such that, for any $\sbb, \sbb'\in\Scal$, $|R(\sbb, a) - R(\sbb', a)| \leq C_a \norm{\sbb - \sbb'}$.
In both cases, $\norm{\cdot}$ denotes the Euclidean distance.
\end{definition}
Note that, although they are not phrased in causal terms, similar Lipschitz continuity assumptions for the environment dynamics are common in prior work analyzing the theoretical guarantees of reinforcement learning algorithms~\cite{bertsekas1975convergence,hinderer2005lipschitz,rachelson2010locality,pazis2013pac,pirotta2015policy,berkenkamp2017safe,asadi2018lipschitz, touati2020zooming, gottesman2021coarse}.
%
Moreover, for practical applications (\eg, in healthcare), this is a relatively mild assumption to make.
Consider two patients whose vitals $\sbb$ and $\sbb'$ are similar at a certain point in time, they receive the same treatment $a$, and every unobserved factor $\ub$ that may affect their health is also the same.
Intuitively, Definition~\ref{def:lipschitz_scm} implies that their vitals will also evolve similarly in the immediate future, \ie, the values $g_S(\sbb, a, \ub)$ and $g_S(\sbb', a, \ub)$ will not differ dramatically.
In this context, it is worth mentioning that, when the transition mechanism $g_S$ is modeled by a neural network, it is possible to control its Lipschitz constant during training, and penalizing high values can be seen as a regularization method~\cite{anil2019sorting, gouk2021regularisation}.
%

%

Further, we will focus on bijective SCMs~\cite{nasr2023counterfactual}, a fairly broad class of SCMs, which subsumes multiple models studied in 
the causal discovery literature, such as additive noise models~\cite{hoyer2008nonlinear}, post-nonlinear causal models~\cite{ zhang2012identifiability}, location-scale noise models~\cite{immer2023identifiability} and more complex models with neural network components~\cite{pawlowski2020deep, khemakhem2021causal, sanchez2022diffusion}.
\begin{definition}\label{def:invertible}
  An SCM $\Ccal$ is bijective iff the transition mechanism $g_S$ is bijective with respect to its last argument, \ie, there is a well-defined inverse function $g^{-1}_S:\Scal\times \Acal\times \Scal \rightarrow \Ucal$ such that, for every combination of $\sbb_{t+1}, \sbb_t, a_t, \ub_t$ with $\sbb_{t+1}=g_S(\sbb_t, a_t, \ub_t)$, it holds that $\ub_t = g^{-1}_S(\sbb_t, a_t, \sbb_{t+1})$.
\end{definition}
Importantly, bijective SCMs allow for a more concise characterization of the counterfactual transition density given in Eq.~\ref{eq:counterfactual}.
More specifically, after observing an event $\Sbb_{t+1}=\sbb_{t+1}, \Sbb_t=\sbb_t, A_t=a_t$, the value $\ub_t$ of the noise variable $\Ub_t$ can only be such that $\ub_t = g^{-1}_S(\sbb_t, a_t, \sbb_{t+1})$, \ie, the posterior distribution of $\Ub_t$ is a point mass and its density is given by
\begin{equation}\label{eq:posterior}
  f^{\Ccal \given \Sbb_{t+1}=\sbb_{t+1}, \Sbb_t=\sbb_t, A_t=a_t}_{\Ub_t}(\ub) =
  \mathds{1}[\ub = g^{-1}_S(\sbb_t, a_t, \sbb_{t+1})],
\end{equation}
%
where $\mathds{1}[\cdot]$ denotes the indicator function.
Then, for a given episode $\tau$ of the decision making process, we have that the (non-stationary) counterfactual transition density is given by
\begin{align*}
    p_{\tau, t}(\Sbb_{t+1} = \sbb'\given \Sbb_{t}= \sbb, A_t=a) &:= 
    p^{\Ccal \given \Sbb_{t+1}=\sbb_{t+1}, \Sbb_t=\sbb_t, A_t=a_t \,;\, do(A_t=a)}(\Sbb_{t+1} = \sbb' \given \Sbb_t=\sbb)\\
    & = \int_{\ub \in \Ucal} \mathds{1}[\sbb' = g_S(\sbb, a, \ub)] \cdot \mathds{1}[\ub = g^{-1}_S(\sbb_t, a_t, \sbb_{t+1})]d\ub \\
    & = \mathds{1}\left[\sbb' = g_S\left(\sbb, a, g^{-1}_S\left(\sbb_t, a_t, \sbb_{t+1}\right)\right)\right]. \numberthis \label{eq:inv_counterfactual}
\end{align*}
Since this density is also a point mass, the resulting counterfactual dynamics are purely deterministic.
That means, under a bijective SCM , the answer to the question ``\emph{What would have been the state at time $t+1$, had we been at state $\sbb$ and taken action $a$ at time $t$, given that, in reality, we were at $\sbb_t$, we took $a_t$ and the environment transitioned to $\sbb_{t+1}$?}'' is just given by $\sbb' = g_S\left(\sbb, a, g^{-1}_S\left(\sbb_t, a_t, \sbb_{t+1}\right)\right)$.

\xhdr{On the counterfactual identifiability of bijective SCMs}
Very recently, Nasr-Esfahany and Kiciman~\cite{nasr2023counterfactual2} have shown that bijective SCMs are in general not counterfactually identifiable when the exogenous variable $\Ub_t$ is multi-dimensional.
In other words, even with access to an infinite amount of triplets $(\sbb_t, a_t, \sbb_{t+1})$ sampled from the true SCM $\Ccal$, it is always possible to find an SCM $\Mcal\neq\Ccal$ with transition mechanism $h_S$ and distributions $P^\Mcal(\Ub_t)$ that entails the same transition distributions as $\Ccal$ (\ie, it fits the observational data perfectly), but leads to different counterfactual predictions.
Although our subsequent algorithmic results do not require the SCM $\Ccal$ to be counterfactually identifiable, the subclass of bijective SCMs we will use in our experiments in Section~\ref{sec:real} is counterfactually identifiable.
The defining attribute of this subclass, which we refer to as \emph{element-wise bijective SCMs}, is that the transition mechanism $g_S$ can be decoupled into $D$ independent 
mechanisms $g_{S,i}$ such that $S_{t+1,i} = g_{S,i}(\Sbb_t, A_t, U_{t,i})$ for $i\in\{1,\ldots,D\}$. 
This implies $S_{t+1,i} \independent U_{t,j} \given U_{t,i}, \Sbb_t, A_t$ for $j\neq i$, however, $U_{t,i}$, $U_{t,j}$ do not need to be independent.
%
Informally, we have the following identifiability result (refer to Appendix~\ref{app:identifiability} for a formal version of the theorem along with its proof, which follows a similar reasoning 
to proofs found in related work~\cite{lu2020sample,nasr2023counterfactual}):
%
\begin{theorem}[Informal]
  \label{thm:informal-identifiability}
  Let $\Ccal$ and $\Mcal$ be two element-wise bijective SCMs such that their entailed transition distributions for $\Sbb_{t+1}$ given any value of $\Sbb_t, A_t$ are always identical.
  Then, all their counterfactual predictions based on an observed transition $(\sbb_t, a_t, \sbb_{t+1})$ will also be identical.
\end{theorem}

\xhdr{On the assumption of no unobserved confounding}
The assumption that there are no hidden confounders 
is a frequent assumption made by work at the intersection of counterfactual reasoning and reinforcement learning~\cite{buesing2018woulda, oberst2019counterfactual,lu2020sample,tsirtsis2021counterfactual} and, more broadly, in the causal inference literature~\cite{sussex2021near, bica2021invariant, cundy2021bcd, aglietti2021dynamic, chen2021multi}.
That said, there is growing interest in developing off-policy methods for partially observable MPDs (POMDPs) that are robust to certain types of confounding~\cite{tennenholtz2020off, namkoong2020off,wang2021provably}, and in learning dynamic treatment regimes in sequential settings with non-Markovian structure~\cite{zhang2019near,zhang2020designing}.
Moreover, there is a line of work focusing on the identification of counterfactual quantities in non-sequential confounded environments~\cite{shpitser2018identification, correa2021nested, zhang2022partial}.
In that context, we consider 
the computation of (approximately) optimal counterfactual action sequences under confounding as a very interesting direction for future work.

\vspace*{-1mm}
\section{Problem statement}
\label{sec:problem}
\vspace*{-1mm}
%
%

Let $\tau$ be an observed episode of a decision making process whose dynamics are characterized by a Lipschitz-continuous bijective SCM. 
%
To characterize the counterfactual outcome that any alternative action sequence would have achieved under the circumstances of the particular episode, we build upon the formulation of Section~\ref{sec:counterfactuals}, and we define a non-stationary counterfactual MDP $\Mcal^+=(\Scal^+, \Acal, F^+_{\tau,t}, R^+, T)$ with deterministic transitions.
Here,
$\Scal^+ = \Scal \times [T-1]$
is an enhanced state space such that each $\sbb^+ \in \Scal^+$ is a pair $(\sbb, l)$ indicating that the counterfactual episode  would have been at state $\sbb \in \Scal$ with $l$ action changes already performed.
Accordingly, $R^+$ is a reward function which takes the form $R^+((\sbb, l), a) = R(\sbb, a)$ for all $(\sbb, l)\in\Scal^+$, $a\in\Acal$, \ie, it does not change depending on the number of action changes already performed.
Finally, the time-dependent transition function $F^+_{\tau,t}: \Scal^+ \times \Acal \rightarrow \Scal^+$ is defined as
\begin{align}\label{eq:mdp_func}
    F^+_{\tau,t} \left(\left(\sbb, l\right), a\right) = 
    \begin{cases}
        \left(g_S\left(\sbb, a, g^{-1}_S\left(\sbb_t, a_t, \sbb_{t+1}\right)\right), l+1 \right) & \text{if } (a\neq a_t) \\
        \left(g_S\left(\sbb, a_t, g^{-1}_S\left(\sbb_t, a_t, \sbb_{t+1}\right)\right), l \right) & \text{otherwise}.
    \end{cases}
\end{align}
Intuitively, here we set the transition function according to the point mass of the counterfactual transition density given in Eq.~\ref{eq:inv_counterfactual}, and we use the second coordinate to keep track of the changes that have been performed in comparison to the observed action sequence up to the time step $t$.

Now, given the initial state $\sbb_0$ of the episode $\tau$ and any counterfactual action sequence $\{a'_t\}_{t=0}^{T-1}$, we can compute the corresponding counterfactual episode $\tau' = \{(\sbb'_t, l_t), a'_t\}_{t=0}^{T-1}$. Its sequence of states is given recursively by
\begin{equation}\label{eq:mdp_trans}
  \left(\sbb'_1, l_1\right) = F^+_{\tau, 0}\left(\left(\sbb_0, 0\right), a'_0\right) \,\,\, \text{and} \,\,\,
  \left(\sbb'_{t+1}, l_{t+1}\right) = F^+_{\tau, 0}\left(\left(\sbb'_t, l_t\right), a'_t\right) \,\, \text{for} \,\, t\in \{1, \ldots, T-1\},
\end{equation}
and its counterfactual outcome is given by $o^+(\tau') := \sum_t R^+\left(\left(\sbb'_t, l_t\right), a'_t\right) = \sum_t R\left(\sbb'_t, a'_t\right)$.

Then, similarly as in Tsirtsis et al.~\cite{tsirtsis2021counterfactual}, our ultimate goal is to find the counterfactual action sequence $\{a'_t\}_{t=0}^{T-1}$ that, starting from the observed initial state $\sbb_0$, maximizes the counterfactual outcome subject to a constraint on the number of counterfactual actions that can differ from the observed ones, \ie, \looseness=-1
\begin{equation}\label{eq:statement}
    \underset{a'_0, \ldots, a'_{T-1}}{\text{maximize}} \quad 
    o^+(\tau') \qquad
  \text{subject to} \quad \sbb'_0 = \sbb_0 \,\,\,\text{and}\,\,\, \sum_{t=0}^{T-1} \one [a_t \neq a'_t] \leq k,
\end{equation}
where $a_0, \ldots, a_{T-1}$ are the observed actions.
Unfortunately, using a reduction from the classic partition problem~\cite{karp1972reducibility}, the following theorem shows that we cannot hope 
to find the optimal action sequence in polynomial time:\footnote{The supporting proofs of all Theorems, Lemmas and Propositions can be found in Appendix~\ref{app:proofs}.}
%
%

\begin{theorem}\label{thm:hardness}
    The problem defined by Eq.~\ref{eq:statement}.
    is NP-Hard.
\end{theorem}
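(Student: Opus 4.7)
The plan is to reduce from the classic \textsc{Partition} problem: given positive integers $w_1,\dots,w_n$ with $\sum_i w_i = 2S$, decide whether some subset $A\subseteq\{1,\dots,n\}$ satisfies $\sum_{i\in A} w_i = S$. Starting from any such \textsc{Partition} instance I will build, in polynomial time, an instance of Eq.~\ref{eq:statement} whose optimum equals $S$ if and only if the \textsc{Partition} instance is a YES-instance, so a polynomial-time algorithm for Eq.~\ref{eq:statement} would decide \textsc{Partition}.

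For the construction I take horizon $T = n$, budget $k = n$ (so the Hamming constraint is vacuous), action set $\Acal=\{0,1\}$, state space $\Scal=\RR^{n+1}$ with a shift-register encoding, and noise space $\Ucal=\RR^{n+1}$ with any continuous prior (\eg, standard Gaussian). The transition mechanism and reward are
\[
  g_S(\sbb, a, \ub) \;=\; (s_0 + a\, s_1,\; s_2,\; \dots,\; s_n,\; 0) + \ub,
  \qquad
  R(\sbb, a) \;=\; |s_0 - S| - |s_0 + a\, s_1 - S|.
\]
The observed initial state is $\sbb_0 = (0, w_1, \dots, w_n)$ and the observed actions are $a_t = 0$ for all $t$, which produces an observed trajectory whose recovered noise values are $\ub_t = \mathbf{0}$ for every $t$. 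Since $g_S$ is a bounded linear map in $\sbb$ and a pure translation in $\ub$, it is Lipschitz in its first argument and bijective in its last (with inverse $g_S^{-1}(\sbb,a,\sbb') = \sbb' - (s_0 + a\, s_1, s_2, \dots, s_n, 0)$), so the SCM meets Definitions~\ref{def:lipschitz_scm} and~\ref{def:invertible}; the reward is $2$-Lipschitz in $\sbb$ because $|\cdot - S|$ is $1$-Lipschitz.

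For correctness I take any counterfactual action sequence $\{a'_t\}_{t=0}^{T-1}$. Plugging $\ub_t = \mathbf{0}$ into Eq.~\ref{eq:inv_counterfactual}, the shift-register dynamics give $\sbb'_t = (x_t, w_{t+1}, \dots, w_n, 0, \dots, 0)$ with $x_t = \sum_{i<t} a'_i\, w_{i+1}$, and the per-step rewards telescope to
\[
  o^+(\tau') \;=\; \sum_{t=0}^{n-1} \bigl(|x_t - S| - |x_{t+1} - S|\bigr) \;=\; S - |x_n - S|.
\]
Hence the optimum of Eq.~\ref{eq:statement} is at most $S$, with equality iff the set $A = \{i : a'_{i-1} = 1\}$ satisfies $\sum_{i\in A} w_i = S$. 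Since the reduction has polynomial size, NP-hardness follows. The main obstacle is simulating the time-varying ``pick/skip'' structure of \textsc{Partition} with a \emph{single, stationary} transition that is simultaneously bijective in the noise and Lipschitz in the state, together with a Lipschitz reward that isolates the ``did the sum hit $S$?'' test to the end of the horizon; the shift-register encoding resolves the first issue by streaming the weights through coordinate $s_1$, and writing the reward as the telescoping increment of the Lipschitz potential $|s_0 - S|$ resolves the second, avoiding any non-Lipschitz time indicators.
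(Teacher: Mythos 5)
Your proof is correct. Like the paper, you reduce from \textsc{Partition} with two actions playing the role of include/exclude and the running subset sum accumulating in a state coordinate, but your gadget differs from the paper's in two substantive ways. First, the paper keeps the state dimension fixed at $D=2$ and delivers the weights to the counterfactual dynamics through the posterior noise values $\ub_t=(v_{t+1},v_{t+2})$ recovered from a carefully chosen observed trajectory, whereas you place all the weights in the initial state of an $(n{+}1)$-dimensional shift register and make the recovered noise zero; your reduction is still polynomial, but the paper's version establishes hardness already for two-dimensional state spaces, which is a marginally stronger statement. Second, to keep the reward Lipschitz while ensuring that only the final subset sum is penalized, the paper uses an action-independent double-hinge reward that vanishes whenever an auxiliary coordinate satisfies $s_2\geq 1$ (which it arranges to hold at every intermediate step) and fires only at $t=T-1$, which requires some casework about the intermediate states; your telescoping reward $|s_0-S|-|s_0+a\,s_1-S|$ sidesteps that entirely, since the per-step terms cancel and the outcome collapses to $S-|x_n-S|$ for every action sequence, including the $a=0$ steps where the reward is identically zero. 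Both routes are valid and both correctly verify bijectivity, continuity in the noise, and Lipschitz continuity of $g_S$ and $R$; yours is arguably the cleaner correctness argument, the paper's is dimension-optimal. (One cosmetic nit: the tight Lipschitz constant of your reward for $a=1$ is $\sqrt{5}$ rather than $2$, but any finite constant suffices for Definition~\ref{def:lipschitz_scm}.)
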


The proof of the theorem relies on a reduction from the partition problem~\cite{karp1972reducibility}, which is known to be NP-complete, to our problem, defined 
in Eq.~\ref{eq:statement}.
At a high-level, we map any instance of the partition problem to an instance of our problem, taking special care to construct a reward function and an observed action 
sequence, such that the optimal counterfactual outcome $o^+(\tau^*)$ takes a specific value if and only if there exists a valid partition for the original instance.
%
The hardness result of Theorem~\ref{thm:hardness} motivates our subsequent focus on the design of a method that \emph{always} finds the optimal solution to our 
problem at the expense of a potentially higher runtime for some problem instances.

\section{Finding the optimal counterfactual action sequence via A* search}
\label{sec:astar}
To deal with the increased computational complexity of the problem, we develop an optimal search method based on the classic $A^*$ algorithm~\cite{russel2013artificial}, which we have found to be very efficient in practice.
Our starting point is the observation that, the problem of Eq.~\ref{eq:statement} presents an optimal substructure, \ie, its optimal solution can be constructed by combining optimal solutions to smaller sub-problems.
For an observed episode $\tau$, let $V_{\tau}(\sbb,l, t)$ be the maximum counterfactual reward that could have been achieved in a counterfactual episode where, at time $t$, the process is at a (counterfactual) state $\sbb$, and there are so far $l$ actions that have been different in comparison with the observed action sequence.
Formally,
\begin{equation*}
    V_\tau(\sbb, l, t) = \max_{a'_t, \ldots, a'_{T-1}} \sum_{t'=t}^{T-1} R(\sbb'_{t'}, a'_{t'}) \qquad
    \text{subject to} \quad \sbb'_t = \sbb \,\,\,\text{and}\,\,\, \sum_{t'=t}^{T-1} \one [a_{t'} \neq a'_{t'}] \leq k - l.
\end{equation*}
Then, it is easy to see that the quantity $V_\tau (\sbb,l , t)$ can be given by the recursive function
\begin{equation}\label{eq:substructure}
    V_{\tau}(\sbb, l, t) = \max_{a\in\Acal} \left\{R(\sbb, a) + V_{\tau}\left(\sbb_a, l_a, t+1\right)\right\}, \quad \text{for all } \sbb\in\Scal,\ l<k \text{ and } t<T-1,
\end{equation} 
where $(\sbb_a, l_a) = F^+_{\tau,t} \left(\left(\sbb, l\right), a\right)$.
In the base case of $l=k$ (\ie, all allowed action changes are already performed), we have $V_{\tau}(\sbb, k, t) = R(\sbb, a_t) + V_{\tau}\left(\sbb_{a_t}, l_{a_t}, t+1\right)$ for all $\sbb\in\Scal$ and $t<T-1$, and $V_{\tau}(\sbb, k, T-1) = R(\sbb, a_{T-1})$ for $t=T-1$.
Lastly, when $t=T-1$ and $l<k$, we have $V_{\tau}(\sbb, l, T-1) = \max_{a\in\Acal}R(\sbb, a)$ for all $\sbb\in\Scal$.

Given the optimal substructure of the problem, one may be tempted to employ a typical dynamic programming approach to compute the values $V_\tau (\sbb, l, t)$ in a bottom-up fashion.
However, the complexity of the problem lies in the fact that, the states $\sbb$ are real-valued vectors whose exact values depend on the entire action sequence that led to them.
Hence, to enumerate all the possible values that $\sbb$ might take, one has to enumerate all possible action sequences in the search space, which is equivalent to solving our problem with a brute force search.
In what follows, we present our proposed method to find optimal solutions using the $A^*$ algorithm, with the caveat that its runtime varies depending on the problem instance, and it can be equal to that of a brute force search in the worst case.
\looseness=-1
%
%
%
%
%

\begin{figure}[t]
  \centering
  \subfloat[Search graph]{\label{fig:algo_a}
       \centering
       \includegraphics[width=0.35\linewidth]{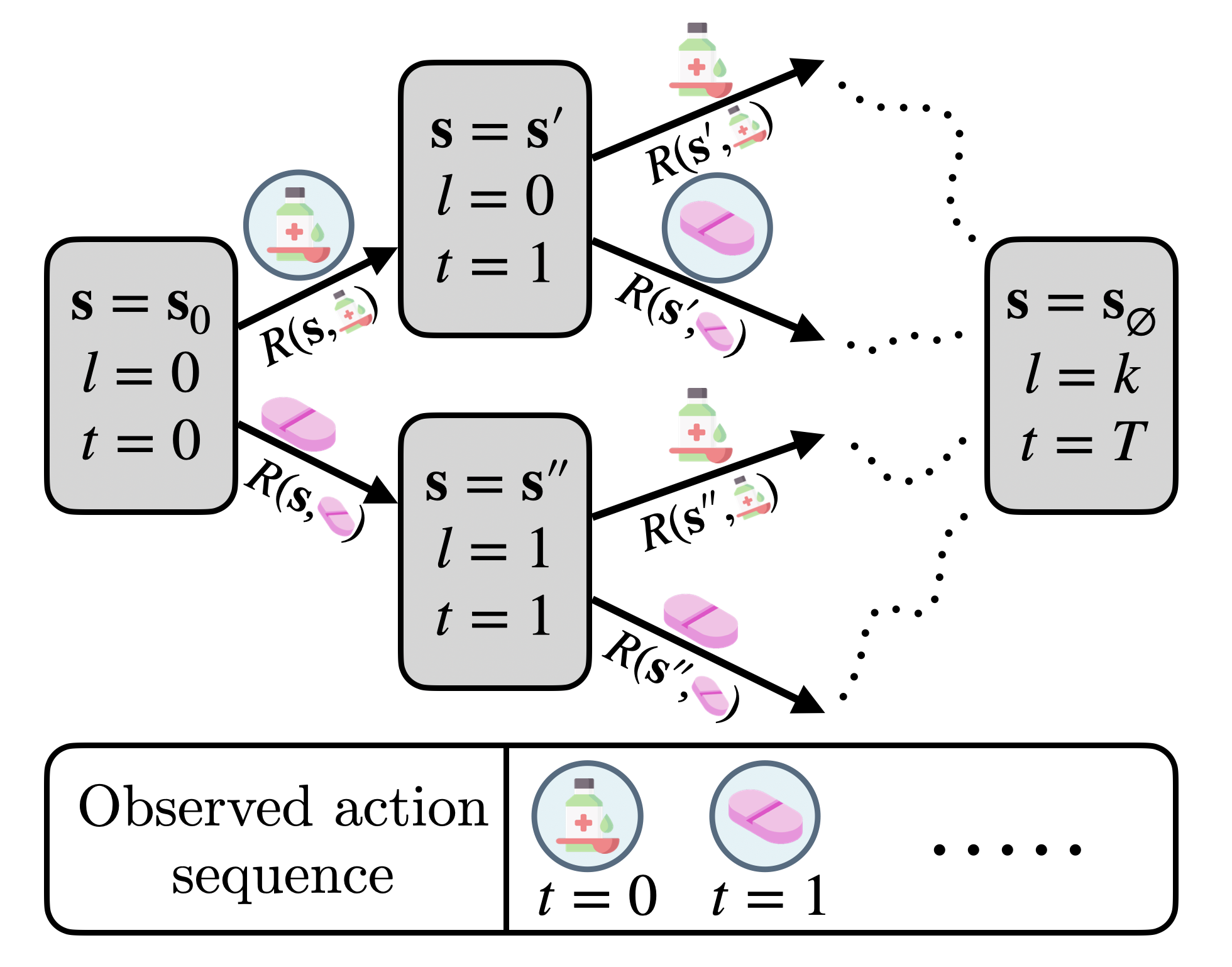}
  }
  \hspace{2cm}
    \subfloat[Heuristic function computation]{\label{fig:algo_b}
       \centering
       \includegraphics[width=0.35\linewidth]{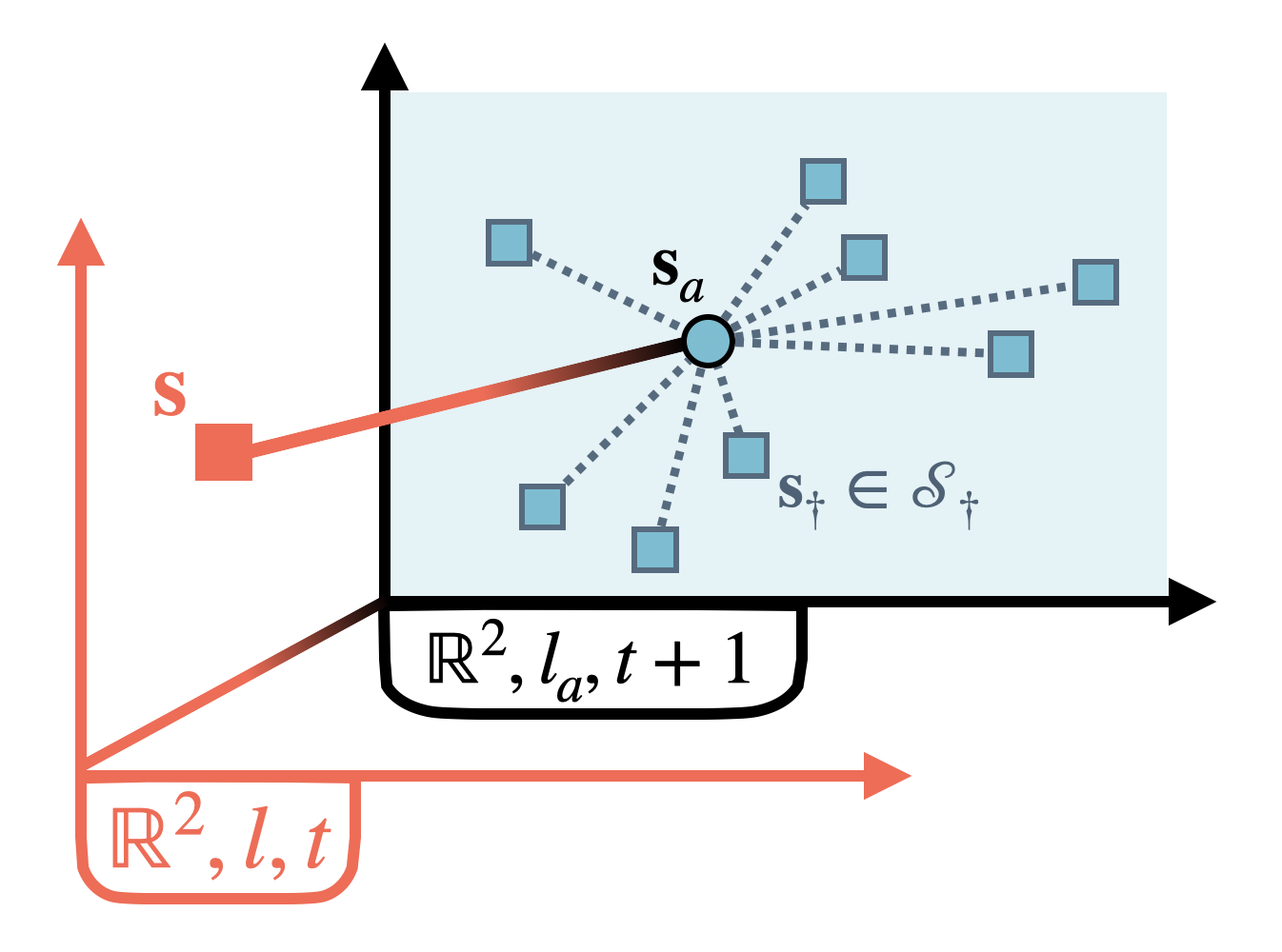}
    }
    \caption{
        Main components of our search method based on the A* algorithm.
        %
        Panel (a) shows the search graph for a problem instance with $|\Acal|=2$.
        Here, each box represents a node $v=(\sbb,l,t)$ of the graph, and each edge represents a counterfactual transition.
        %
        Next to each edge, we include the action $a\in\Acal$ causing the transition and the associated reward. 
        %
        %
        Panel (b) shows the heuristic function computation, where 
        the two axes represent a (continuous) state space $\Scal=\RR^2$ and the two levels on the z-axis correspond to differences in the (integer) values 
        $(l,t)$ and $(l_a,t+1)$. 
        Here, the blue squares correspond to the finite states in the anchor set $\Scal_\dagger$ and $(\sbb_a, l_a) = F^+_{\tau,t} \left(\left(\sbb, l\right), a\right)$.
        %
        %
      }
\label{fig:algo}
\end{figure}

\xhdr{Casting the problem as graph search}
%
We represent the solution space of our problem as a graph, where each node $v$ corresponds to a tuple $(\sbb, l, t)$ with $\sbb\in\Scal$,
$l\in[k]$ and $t\in[T]$.
%
Every node $v=(\sbb, l, t)$ with $l<k$ and $t<T-1$ has $|\Acal|$ outgoing edges, each one associated with an action $a\in\Acal$, carrying a reward $R(\sbb, a)$, and leading to a node $v_a = (\sbb_a, l_a, t+1)$ such that $(\sbb_a, l_a) = F^+_{\tau,t} \left(\left(\sbb, l\right), a\right)$.
In the case of $l=k$, the node $v$ has exactly one edge corresponding to the observed action $a_t$ at time $t$.
Lastly, when $t=T-1$, the outgoing edge(s) lead(s) to a common node $v_T=(\sbb_\emptyset, k, T)$ which we call the \emph{goal node}, and it has zero outgoing edges itself.
Note that, the exact value of $\sbb_\emptyset$ is irrelevant, and we only include it for notational completeness.

Let $\sbb_0$ be the initial state of the observed episode.
Then, it is easy to notice that, starting from the root node $v_0=(\sbb_0, 0, 0)$, the first elements of each node $v_i$ on a path $v_0, \ldots, v_i, \ldots, v_T$ form a sequence of counterfactual states, and the edges that connect those nodes are such that the corresponding counterfactual action sequence differs from the observed one in at most $k$ actions.
That said, the counterfactual outcome $o^+(\tau)=\sum_{t=0}^{T-1} R(\sbb'_t, a'_t)$ is expressed as the sum of the rewards associated with each edge in the path, and the problem defined by Eq.~\ref{eq:statement} is equivalent to finding the path of maximum total reward that starts from $v_0$ and ends in $v_T$.
Figure~\ref{fig:algo_a} illustrates the search graph for a simple instance of our problem.
%
Unfortunately, since the states $\sbb$ are vectors of real values, even enumerating all the graph's nodes requires time exponential in the number of 
actions $|\Acal|$, which makes classic algorithms that search over the entire graph non-practical.
%

%
To address this challenge, we resort to the $A^*$ algorithm, which performs a more efficient search over the graph by preferentially exploring only 
parts of it where we have prior information that they are more likely to lead to paths of higher total reward.
%
Concretely, the algorithm proceeds iteratively and maintains a queue of nodes to visit, initialized to contain only the root node $v_0$.
Then, at each step, it selects one node from the queue, and it retrieves all its children nodes in the graph which are subsequently added to the queue.
It terminates when the node being visited is the goal node $v_T$.
Refer to Algorithm~\ref{alg:astar} in Appendix~\ref{app:astar} for a pseudocode implementation of the $A^{*}$ algorithm.

The key element of the $A^*$ algorithm is the criterion based on which it selects which node from the queue to visit next.
Let $v_i=(\sbb_i, l_i, t)$ be a candidate node in the queue and $r_{v_i}$ be the total reward of the path that the algorithm has followed so far to reach from $v_0$ to $v_i$.
Then, the $A^*$ algorithm visits next the node $v_i$ that maximizes the sum $r_{v_i} + \hat{V}_\tau (\sbb_i, l_i, t)$, where $\hat{V}_\tau$ is a \emph{heuristic function} that aims to estimate the maximum reward that can be achieved via any path starting from $v_i=(\sbb_i, l_i, t)$ and ending in the goal node $v_T$, \ie, it gives an estimate for the quantity $V_\tau (\sbb_i, l_i, t)$.
Intuitively, the heuristic function can be thought of as an ``eye into the future'' of the graph search, that guides the algorithm towards nodes that are more likely to lead to the optimal solution and the algorithm's performance depends on the quality of the approximation of $V_\tau(\sbb_i, l_i, t)$ by $\hat{V}_\tau(\sbb_i, l_i, t)$.
Next, we will look for a heuristic function that satisfies \emph{consistency}\footnote{A heuristic function $\hat{V}_\tau$ is consistent iff, for nodes $v=(\sbb, l, t)$, $v_a = (\sbb_a, l_a, t+1)$ connected with an edge associated with action $a$, it satisfies $\hat{V}_\tau(\sbb, l, t) \geq R(\sbb, a) + \hat{V}_\tau(\sbb_a, l_a, t+1)$~\cite{pearl1984heuristics}.} to guarantee that the $A^{*}$ algorithm as described above 
returns the optimal solution upon termination~\cite{russel2013artificial}.
%
%
%


\xhdr{Computing a consistent heuristic function}
%
%
We first propose an algorithm that computes the function's values $\hat{V}_\tau(\sbb, l, t)$ for a finite set of points such that
$l\in[k]$, $t\in[T-1]$,
$\sbb\in\Scal_\dagger \subset \Scal$, where $\Scal_\dagger$ is a pre-defined \emph{finite} set of states---an \emph{anchor
set}---whose construction we discuss later. 
Then, based on the Lipschitz-continuity of the SCM $\Ccal$, we show that these computed values of $\hat{V}_\tau$ are valid upper bounds of the corresponding values $V_\tau(\sbb, l, t)$ 
and we expand the definition of the heuristic function $\hat{V}_\tau$ over all $\sbb\in\Scal$ by expressing it in terms of those upper bounds.
Finally, we prove that the function resulting from the aforementioned procedure is consistent.

To compute the upper bounds $\hat{V}_\tau$, we exploit the observation that the values $V_\tau(\sbb, l, t)$ 
satisfy a form of Lipschitz-continuity, as stated in the following Lemma.
\begin{lemma}\label{lem:lipschitz}
    Let $\ub_t = g^{-1}_S\left(\sbb_t, a_t, \sbb_{t+1}\right)$, $K_{\ub_t} = \max_{a\in\Acal} K_{a, \ub_t}$, $C = \max_{a\in\Acal} C_a$ and the sequence $L_0,\ldots,L_{T-1}\in\RR_+$ be such that $L_{T-1}=C$ and $L_{t}=C + L_{t+1}K_{\ub_t}$ for
    $t\in[T-2]$.
    Then, it holds that $|V_{\tau}(\sbb, l, t) - V_{\tau}(\sbb', l, t)| \leq L_t \norm{\sbb - \sbb'}$, for all 
    $t\in[T-1]$, $l\in[k]$
    and $\sbb, \sbb'\in\Scal$.
\end{lemma}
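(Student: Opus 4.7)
I would prove the statement by backward induction on the time step $t$, exploiting the fact that under a bijective SCM the counterfactual dynamics at time step $t$ are driven by a single fixed noise value $\ub_t = g_S^{-1}(\sbb_t, a_t, \sbb_{t+1})$ that does not depend on the counterfactual state. This means that, starting from two different counterfactual states $\sbb$ and $\sbb'$ at time $t$ with the same counter $l$, applying the same action $a$ yields successor states $g_S(\sbb, a', \ub_t)$ and $g_S(\sbb', a', \ub_t)$ (with $a' = a$ if $a \neq a_t$ and $a' = a_t$ otherwise) whose distance is controlled via Definition~\ref{def:lipschitz_scm} by $K_{a',\ub_t}\|\sbb-\sbb'\| \leq K_{\ub_t}\|\sbb-\sbb'\|$, and the counter $l_a$ evolves identically in both cases.

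\textbf{Base case} ($t=T-1$). Here $V_\tau(\sbb, l, T-1)$ equals either $\max_{a\in\Acal} R(\sbb, a)$ (if $l<k$) or $R(\sbb, a_{T-1})$ (if $l=k$). In both cases the Lipschitz continuity of $R$ gives $|V_\tau(\sbb, l, T-1) - V_\tau(\sbb', l, T-1)| \leq C\|\sbb-\sbb'\| = L_{T-1}\|\sbb-\sbb'\|$, using the standard inequality $|\max_a f_a(\sbb) - \max_a f_a(\sbb')| \leq \max_a |f_a(\sbb) - f_a(\sbb')|$ in the first sub-case.

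\textbf{Inductive step.} Assume the bound holds at time $t+1$ with constant $L_{t+1}$. For fixed $l<k$ and any action $a\in\Acal$, write $(\sbb_a, l_a) = F^+_{\tau,t}((\sbb, l), a)$ and $(\sbb'_a, l'_a) = F^+_{\tau,t}((\sbb', l), a)$; by Eq.~\ref{eq:mdp_func} we have $l_a = l'_a$ and $\|\sbb_a - \sbb'_a\| \leq K_{\ub_t}\|\sbb-\sbb'\|$. Combining the Lipschitz bound on $R$ with the inductive hypothesis applied to $V_\tau(\cdot, l_a, t+1)$ yields, for every $a$,
\begin{equation*}
\bigl|[R(\sbb,a) + V_\tau(\sbb_a, l_a, t+1)] - [R(\sbb',a) + V_\tau(\sbb'_a, l_a, t+1)]\bigr| \leq (C + L_{t+1}K_{\ub_t})\|\sbb-\sbb'\| = L_t\|\sbb-\sbb'\|.
\end{equation*}
Taking the maximum over $a$ on both sides and using the same max inequality as in the base case produces the desired bound on $|V_\tau(\sbb, l, t) - V_\tau(\sbb', l, t)|$. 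The case $l=k$ is analogous, but simpler since there is no maximization: only the forced action $a_t$ is taken, and the same per-action bound directly gives $L_t\|\sbb-\sbb'\|$.

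\textbf{Main obstacle.} The only subtlety worth flagging is verifying that the inductive step is legitimate for all $l$ simultaneously: because the counter $l_a$ can either stay at $l$ or jump to $l+1$, one must note that the identical action $a$ produces the same $l_a$ starting from $(\sbb,l)$ and $(\sbb',l)$, so that a single inductive hypothesis at time $t+1$ (with the same second argument $l_a$) suffices. Everything else is a routine propagation of Lipschitz constants, and the recursive definition $L_t = C + L_{t+1} K_{\ub_t}$ is precisely what falls out of combining the reward-Lipschitz term with the image-Lipschitz term from $g_S$.
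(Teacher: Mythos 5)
Your proof is correct and follows essentially the same route as the paper's: backward induction on $t$ with the statement quantified over all $l$ at each step, the observation that the fixed posterior noise $\ub_t$ makes $F^+_{\tau,t}$ Lipschitz in the counterfactual state with constant $K_{\ub_t}$ while the counter $l_a$ evolves identically from $(\sbb,l)$ and $(\sbb',l)$, and the recursion $L_t = C + L_{t+1}K_{\ub_t}$. The only cosmetic difference is that you invoke $|\max_a f_a - \max_a g_a| \le \max_a |f_a - g_a|$ where the paper bounds the difference via the maximizing action $a^*$; these are the same standard argument.
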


\begin{algorithm}[t]
\small
	\textbf{Input}: States $\Scal$, actions $\Acal$, observed action sequence $\{a_t\}_{t=0}^{t=T-1}$, horizon $T$, transition function $F^+_{\tau,t}$, reward function $R$, constraint $k$, anchor set $\Scal_\dagger$. \\
	\textbf{Initialize}: $\hat{V}_{\tau}(\sbb, l, T-1) \leftarrow \max_{a\in\Acal} R(\sbb,a),\quad s\in\Scal_\dagger,\, l=0,\ldots,k-1.$ \\
    \qquad\qquad\quad\,$\hat{V}_{\tau}(\sbb, k, T-1) \leftarrow R(\sbb,a_t),\quad s\in\Scal_\dagger.$ \\
  \For{$t=T-2,\ldots,0$}{
	  \For{$l=k,\ldots, 0$}{
        $available\_actions \leftarrow a_t$ \textbf{ if } $l=k$ \textbf{ else } $\Acal$ \\
        \For{$\sbb\in\Scal_\dagger$}{
        $bounds \leftarrow \emptyset$ \\
        \For{$a\in available\_actions$}{
            $\sbb_a, l_a \leftarrow F^+_{\tau,t} \left(\left(\sbb, l\right), a\right)$ \CommentLeft*[r]{Get the min bound for $V_{\tau}(\sbb_a, l_a, t+1)$}
            $V_a \leftarrow \min_{\sbb_\dagger \in \Scal_\dagger} \{ \hat{V}_{\tau}(\sbb_\dagger, l_a, t+1) + L_{t+1} \norm{\sbb_\dagger - \sbb_a} \}$ \CommentRight*[r]{based on $\Scal_\dagger$}
            $bounds \leftarrow bounds \cup \{R(\sbb, a) + V_a\}$ \\
        }
        $\hat{V}_{\tau}(\sbb, l, t) \leftarrow \max(bounds)$ \Comment*[r]{Get the max bound over the actions}
        }
	  }
  }
  \textbf{return} $\hat{V}_{\tau}(\sbb, l, t) \quad \text{for all } \sbb\in\Scal_\dagger,\ 
l\in[k],\ t\in[T-1]$
  \caption{It computes upper bounds $\hat{V}_\tau(\sbb, l, t)$ for the values $V_\tau (\sbb, l, t)$} \label{alg:dp}
  \end{algorithm}

Based on this observation, our algorithm proceeds in a bottom-up fashion and computes valid upper bounds of the values $V_{\tau}(\sbb, l, t)$ for all
$l\in[k]$, $t\in[T-1]$
and $\sbb$ in the anchor set $\Scal_\dagger$.
To get the intuition, assume that, for a given $t$, the values $\hat{V}_\tau(\sbb, l, t+1)$ are already computed for all $\sbb\in\Scal_\dagger$,
$l\in[k]$,
and they are indeed valid upper bounds of the corresponding $V_\tau(\sbb, l, t+1)$.
Then, let $(\sbb_a, l_a) = F^+_{\tau,t} \left(\left(\sbb, l\right), a\right)$ for some $\sbb\in\Scal_\dagger$ and
$l\in[k]$.
%
Since $\sbb_a$ itself may not belong to the finite anchor set $\Scal_\dagger$, the algorithm uses the values $\hat{V}_\tau(\sbb_\dagger, l_a, t+1)$ of all anchors $\sbb_\dagger \in \Scal_\dagger$ in combination with their distance to $\sbb_a$, and it sets the value of $\hat{V}_\tau(\sbb, l, t)$ in way that it is also guaranteed to be a (maximally tight) upper bound of $V_\tau(\sbb, l, t)$.
Figure~\ref{fig:algo_b} illustrates the above operation.
%
Algorithm~\ref{alg:dp} summarizes the overall procedure, which is guaranteed to return upper bounds,
as shown by the following proposition:
%
\begin{proposition}\label{prop:anchor}
    For all $\sbb\in\Scal_\dagger$,
    $l\in[k], t\in[T-1]$,
    it holds that $\hat{V}_\tau (\sbb, l, t) \geq V_\tau (\sbb, l, t)$, where $\hat{V}_\tau (\sbb, l, t)$ are the values of the heuristic function computed by Algorithm~\ref{alg:dp}.
\end{proposition}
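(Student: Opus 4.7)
The plan is to proceed by backward induction on $t$, with the inductive hypothesis that
$\hat{V}_\tau(\sbb_\dagger, l', t+1) \geq V_\tau(\sbb_\dagger, l', t+1)$
for every anchor $\sbb_\dagger \in \Scal_\dagger$ and every $l' \in \{0,\ldots,k\}$. Since the algorithm computes $\hat{V}_\tau$ in exactly this bottom-up order, the induction mirrors the execution.

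For the base case $t = T-1$, I would just inspect the initialization step of Algorithm~\ref{alg:dp}: it sets $\hat{V}_\tau(\sbb, l, T-1) = \max_{a \in \Acal} R(\sbb, a)$ when $l < k$ and $\hat{V}_\tau(\sbb, k, T-1) = R(\sbb, a_{T-1})$, which coincide with the definition of $V_\tau(\sbb, l, T-1)$ given after Eq.~\ref{eq:substructure}. So equality (and thus the desired inequality) holds trivially at the horizon.

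For the inductive step, fix any $\sbb \in \Scal_\dagger$, $l \in \{0,\ldots,k\}$, $t < T-1$, and let $(\sbb_a, l_a) = F^+_{\tau,t}((\sbb,l), a)$ and $A_l$ denote the set of available actions ($\Acal$ if $l < k$, and $\{a_t\}$ otherwise). By construction,
\[
\hat{V}_\tau(\sbb, l, t) = \max_{a \in A_l} \left\{R(\sbb, a) + \min_{\sbb_\dagger \in \Scal_\dagger} \left\{\hat{V}_\tau(\sbb_\dagger, l_a, t+1) + L_{t+1} \norm{\sbb_\dagger - \sbb_a}\right\}\right\}.
\]
The crux is to show that the inner bracket lower bounds $V_\tau(\sbb_a, l_a, t+1)$ from above, uniformly in $\sbb_\dagger$. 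For any $\sbb_\dagger \in \Scal_\dagger$, the inductive hypothesis gives the first inequality and Lemma~\ref{lem:lipschitz} gives the second in
\[
\hat{V}_\tau(\sbb_\dagger, l_a, t+1) + L_{t+1}\norm{\sbb_\dagger - \sbb_a} \;\geq\; V_\tau(\sbb_\dagger, l_a, t+1) + L_{t+1}\norm{\sbb_\dagger - \sbb_a} \;\geq\; V_\tau(\sbb_a, l_a, t+1).
\]
Taking the minimum over $\sbb_\dagger \in \Scal_\dagger$ preserves this bound, so the inner bracket is $\geq V_\tau(\sbb_a, l_a, t+1)$ for every $a \in A_l$. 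Then taking the max over $a \in A_l$ and applying the recursive identity in Eq.~\ref{eq:substructure} (and its base case for $l = k$) yields $\hat{V}_\tau(\sbb, l, t) \geq V_\tau(\sbb, l, t)$, completing the induction.

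The only subtle step is the use of Lemma~\ref{lem:lipschitz} at level $t+1$: one must check that the Lipschitz constant $L_{t+1}$ used inside the algorithm is the same one appearing in the Lemma, which is exactly how the sequence $L_0, \ldots, L_{T-1}$ was defined. Everything else is a routine unpacking of the algorithm's update rule together with the recursion in Eq.~\ref{eq:substructure}, so I do not expect any genuine obstacle beyond making the two inductions (the algorithm's loop and the proof's backward induction) line up cleanly.
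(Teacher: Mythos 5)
Your proposal is correct and follows essentially the same route as the paper's proof: backward induction on $t$, with the base case read off from the algorithm's initialization and the inductive step combining the inductive hypothesis with the Lipschitz continuity of $V_\tau$ from Lemma~\ref{lem:lipschitz} before taking the min over anchors and the max over actions. The only (harmless) slip is the phrase ``lower bounds \dots from above,'' where you clearly mean that the inner bracket is an upper bound on $V_\tau(\sbb_a, l_a, t+1)$, as your displayed inequality correctly shows.
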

Next, we use the values $\hat{V}_\tau (\sbb, l, t)$ computed by Algorithm~\ref{alg:dp} to expand the definition of $\hat{V}_\tau$ over the entire 
domain as follows.
For some $\sbb\in\Scal$, $a\in\Acal$, let $(\sbb_a, l_a) = F^+_{\tau,t} \left(\left(\sbb, l\right), a\right)$, then, we have that
\begin{align}\label{eq:heuristic}
    \hat{V}_\tau(\sbb, l, t) = 
    \begin{cases}
    0 & \!\!t=T \\
    \underset{a\in\Acal'}{\max\,} R(\sbb, a) & \!\!t=T-1 \\
    \underset{a\in\Acal'}{\max} \left\{R(\sbb, a) + \underset{{\sbb_\dagger \in \Scal_\dagger}}{\min} \left\{\hat{V}_{\tau}(\sbb_\dagger, l_a, t+1) + L_{t+1} \norm{\sbb_\dagger - \sbb_a} \right\}\right\} & \!\!\text{otherwise},\\
    \end{cases}
\end{align}
where $\Acal' = \{a_t\}$ for $l=k$ and $\Acal'=\Acal$ for $l<k$. Finally, the following theorem shows that the resulting heuristic function $\hat{V}_\tau$ is consistent:
\begin{theorem}\label{thm:consistent}
    For any nodes $v=(\sbb, l, t), v_a = (\sbb_a, l_a, t+1)$ with $t<T-1$ connected with an edge associated with action $a$, it holds that $\hat{V}_\tau (\sbb, l, t) \geq R(\sbb, a) + \hat{V}_\tau (\sbb_a, l_a, t+1)$.
    Moreover, for any node $v=(\sbb, l, T-1)$ and edge connecting it to the goal node $v_T=(\sbb_\emptyset, k, T)$, it holds that $\hat{V}_\tau (\sbb, l, T-1) \geq R(\sbb, a) + \hat{V}_\tau (\sbb_\emptyset, k, T)$.
\end{theorem}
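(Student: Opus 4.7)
The plan is to first establish a Lipschitz continuity property of $\hat{V}_\tau$ in its state argument with exactly the constants $L_t$ defined in Lemma~\ref{lem:lipschitz}, and then to derive consistency as a short consequence of it, together with the $\max$ appearing in Eq.~\ref{eq:heuristic}. Concretely, I would prove the auxiliary claim that, for every $l \in \{0,\ldots,k\}$, $t \in \{0,\ldots,T-1\}$ and $\sbb, \sbb' \in \Scal$,
\[
|\hat{V}_\tau(\sbb, l, t) - \hat{V}_\tau(\sbb', l, t)| \;\leq\; L_t\, \norm{\sbb - \sbb'}.
\]

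For the base case $t = T - 1$, both $\max_{a \in \Acal'} R(\sbb, a)$ and $R(\sbb, a_{T-1})$ (depending on whether $l < k$ or $l = k$) are differences of rewards at the same action, so Definition~\ref{def:lipschitz_scm} and the standard contractivity of $\max$ give the bound with constant $C \leq L_{T-1}$. For $t < T-1$, I would use the definition in Eq.~\ref{eq:heuristic}: writing $\sbb_a = g_S(\sbb, a, \ub_t)$ and $\sbb'_a = g_S(\sbb', a, \ub_t)$, the Lipschitz property of $g_S$ gives $\norm{\sbb_a - \sbb'_a} \leq K_{\ub_t}\norm{\sbb-\sbb'}$, and hence $|L_{t+1}\norm{\sbb_\dagger - \sbb_a} - L_{t+1}\norm{\sbb_\dagger - \sbb'_a}| \leq L_{t+1}K_{\ub_t}\norm{\sbb-\sbb'}$ by the triangle inequality. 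Since the anchor values $\hat{V}_\tau(\sbb_\dagger, l_a, t+1)$ are the same under both $\sbb$ and $\sbb'$, the $\min_{\sbb_\dagger}$ is $L_{t+1}K_{\ub_t}$-Lipschitz in its state argument (infimum of Lipschitz functions). Adding the $C\norm{\sbb-\sbb'}$ contribution from $R$, and finally taking $\max$ over $a \in \Acal'$ (which preserves Lipschitz constants), the resulting Lipschitz constant matches the recursion $L_t = C + L_{t+1} K_{\ub_t}$ from Lemma~\ref{lem:lipschitz}. Note that no inductive hypothesis on $\hat{V}_\tau$ is actually needed here—the bound follows from a direct manipulation of Eq.~\ref{eq:heuristic}.

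With the Lipschitz property in hand, consistency follows quickly. For $t < T - 1$, taking the particular action $a$ inside the $\max$ of Eq.~\ref{eq:heuristic} yields
\[
\hat{V}_\tau(\sbb, l, t) \;\geq\; R(\sbb, a) + \min_{\sbb_\dagger \in \Scal_\dagger}\bigl\{\hat{V}_\tau(\sbb_\dagger, l_a, t+1) + L_{t+1}\norm{\sbb_\dagger - \sbb_a}\bigr\}.
\]
The Lipschitz claim at time $t+1$ then gives $\hat{V}_\tau(\sbb_a, l_a, t+1) - \hat{V}_\tau(\sbb_\dagger, l_a, t+1) \leq L_{t+1}\norm{\sbb_\dagger - \sbb_a}$ for every $\sbb_\dagger \in \Scal_\dagger$, so the above minimum is at least $\hat{V}_\tau(\sbb_a, l_a, t+1)$, yielding the desired inequality. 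For $t = T-1$, the action $a$ associated with the edge lies in $\Acal'$ (for $l = k$ only $a_{T-1}$ is available, and the edge uses it; for $l < k$ all actions are allowed), hence $\hat{V}_\tau(\sbb, l, T-1) = \max_{a' \in \Acal'} R(\sbb, a') \geq R(\sbb, a)$, and since $\hat{V}_\tau(\sbb_\emptyset, k, T) = 0$ by the first case of Eq.~\ref{eq:heuristic}, we obtain $\hat{V}_\tau(\sbb, l, T-1) \geq R(\sbb, a) + \hat{V}_\tau(\sbb_\emptyset, k, T)$.

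The main technical hurdle is ensuring the bookkeeping of all the definitional cases (the dependence of $\Acal'$ on whether $l = k$, the convention that $F^+_{\tau,t}$ uses $a_t$ when $a = a_t$ but this still equals $g_S(\sbb,a,\ub_t)$, and the boundary at $t = T-1$) aligns with the recursive construction of $L_t$. Everything else is an application of the triangle inequality and the contractivity of $\min$ and $\max$ to the expression in Eq.~\ref{eq:heuristic}; the recursion $L_t = C + L_{t+1}K_{\ub_t}$ is precisely what makes the Lipschitz constant telescope correctly through a one-step transition.
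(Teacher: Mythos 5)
Your proposal is correct and follows essentially the same route as the paper: your auxiliary claim that $|\hat{V}_\tau(\sbb,l,t)-\hat{V}_\tau(\sbb',l,t)|\le L_t\norm{\sbb-\sbb'}$ is exactly the paper's Lemma~\ref{lem:heuristic-lipschitz}, proved there by the same direct manipulation of Eq.~\ref{eq:heuristic} (triangle inequality, contractivity of $\min$/$\max$, and Lipschitzness of $g_S$ and $R$), and consistency is then derived in the same way by instantiating the $\max$ at the edge's action and lower-bounding the $\min$ over anchors by $\hat{V}_\tau(\sbb_a,l_a,t+1)$. Your observation that no inductive hypothesis on $\hat{V}_\tau$ is needed for the Lipschitz claim (since the anchor values are shared between $\sbb$ and $\sbb'$) is also consistent with the paper's argument.
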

%
%
%
%

\xhdr{Kick-starting the heuristic function computation with Monte Carlo anchor sets}
%
For any $\sbb\not\in\Scal_\dagger$, whenever we compute $\hat{V}_\tau (\sbb, l, t)$ using Eq.~\ref{eq:heuristic}, the 
resulting value is set based on the value $\hat{V}_\tau (\sbb_\dagger, l_a, t+1)$ of some anchor $\sbb_\dagger$, increased 
by a \emph{penalty} term $L_{t+1} \norm{\sbb_\dagger - \sbb_a}$.
Intuitively, this allows us to think of the heuristic function $\hat{V}_\tau$ as an upper bound of the function $V_\tau$ 
whose looseness depends on the magnitude of the penalty terms encountered during the execution of Algorithm~\ref{alg:dp} and each subsequent evaluation of Eq.~\ref{eq:heuristic}.
To speed up the $A^*$ algorithm, note that, ideally, one would want all penalty terms to be zero, \ie, an anchor set that includes all the states $\sbb$ of the nodes $v=(\sbb, l, t)$ that are going to appear in the search graph.
However, as discussed in the beginning of Sec.~\ref{sec:astar}, an enumeration of those states requires a runtime exponential in the number of actions.\looseness=-1
 
To address this issue, we introduce a Monte Carlo simulation technique that adds to the anchor set the observed states $\{\sbb_0, \ldots, \sbb_{T-1}\}$ and all unique states $\{\sbb'_0, \ldots, \sbb'_{T-1}$\} resulting by $M$ randomly sampled counterfactual action sequences $a'_0,\ldots,a'_{T-1}$.
Specifically, for each action sequence, we first sample a number $k'$ of actions to be changed and what those actions are going to be, both uniformly at random from $\{1, \ldots, k\}$ and $\Acal^{k'}$, respectively.
Then, we sample from $\{0,\ldots,T-1\}$ the $k'$ time steps where the changes take place, with each time step $t$ having a probability $L_t/\sum_{t'}L_{t'}$ to be selected.
This biases the sampling towards earlier time steps, where the penalty terms are larger due to the higher Lipschitz constants.
As we will see in the next section, this approach works well in practice, and it allows us to control the runtime of the $A^*$ algorithm by appropriately adjusting the number of samples $M$.
We experiment with additional anchor set selection strategies in Appendix~\ref{app:experiments}.

\vspace*{-2mm}
\section{Experiments using clinical sepsis management data}
\label{sec:real}
\vspace*{-2mm}
\xhdr{Experimental setup}
To evaluate our method, we use real patient data from MIMIC-III~\cite{johnson2016mimic}, a freely accessible critical care dataset commonly used in reinforcement learning for healthcare~\cite{raghu2017deep, komorowski2018artificial, liu2020reinforcement, killian2020empirical}.
We follow the preprocessing steps of Komorowski et al.~\cite{komorowski2018artificial} to identify a cohort of $20{,}926$ patients treated for sepsis~\cite{singer2016third}. 
Each patient record contains vital signs and administered treatment information in time steps of $4$-hour intervals.
%
As an additional preprocessing step, we discard patient records whose associated time horizon $T$ is shorter than $10$, resulting in a final dataset 
of $15{,}992$ patients with horizons between $10$ and $20$.

To form our state space $\Scal=\RR^D$, we use $D=13$ features. Four of these features are demographic or contextual and thus we always set their
counterfactual values to the observed ones.
The remaining $\tilde{D} = 9$ features are time-varying and include the SOFA score~\cite{lambden2019sofa}---a standardized score of organ failure 
rate---along with eight vital signs that are required for its calculation.
Since SOFA scores positively correlate with patient mortality~\cite{ferreira2001serial}, we assume that each $\sbb\in\Scal$ gives a reward $R(\sbb)$ equal to the negation of its SOFA value.
Here, it is easy to see that this reward function is just a projection of $\sbb$, therefore, it is Lipschitz continuous with constant $C_a=1$ for 
all $a\in\Acal$.
Following related work~\cite{raghu2017deep, komorowski2018artificial, killian2020empirical}, we consider an action space $\Acal$ that consists of $25$ actions, which correspond to $5\times 5$ levels of administered vasopressors and intravenous fluids.
Refer to Appendix~\ref{app:details} for additional details on the features and actions.

To model the transition dynamics of the time-varying features, we consider an SCM $\Ccal$ whose transition mechanism takes a location-scale 
form
$g_S(\Sbb_t, A_t, \Ub_t) = h(\Sbb_t, A_t) + \phi(\Sbb_t, A_t) \odot \Ub_t$, where $h,\phi : \Scal\times\Acal \rightarrow \RR^{\tilde{D}}$, and $\odot$ 
denotes the element-wise multiplication~\cite{immer2023identifiability,khemakhem2021causal}.
Notably, this model is element-wise bijective and hence it is counterfactually identifiable, as shown in Section~\ref{sec:counterfactuals}.
Moreover, we use neural networks to model the location and scale functions $h$ and $\phi$ and enforce their Lipschitz constants to be $L_h$ and 
$L_\phi$, respectively.
This results in a Lipschitz continuous SCM $\Ccal$ with $K_{a,\ub} = L_h + L_\phi \max_i |u_i|$.
Further, we assume that the noise variable $\Ub_t$ follows a multivariate Gaussian distribution with zero mean and allow its covariance matrix 
to be a (trainable) parameter.
 
We jointly train the weights of the networks $h$ and $\phi$ and the covariance matrix of the noise prior on the observed patient transitions using 
stochastic gradient descent with the negative log-likelihood of each transition as a loss.
In our experiments, if not specified otherwise, we use an SCM with Lipschitz constants $L_h=1.0$, $L_\phi=0.1$ that achieves a log-likelihood 
only $6\%$ lower to that of the best model trained without any Lipschitz constraint.
Refer to Appendix~\ref{app:details} for additional details on the network architectures, the training procedure and the way we enforce Lipschitz continuity.\footnote{All experiments were performed using an internal cluster of machines equipped with 16 Intel(R) Xeon(R) 3.20GHz CPU cores, 512GBs of memory and 2 NVIDIA A40 48GB GPUs.}

\begin{figure}[t]
	\centering
	\subfloat[Efficiency vs. $L_h$]{
    		 \centering
             \label{fig:performance_a}
         \includegraphics[width=0.33\linewidth]{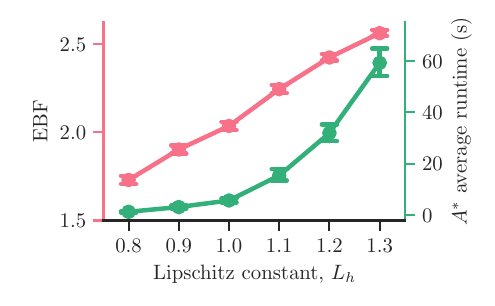}
    }
    	\subfloat[Efficiency vs. $M$]{
    		 \centering
    		 \includegraphics[width=0.33\linewidth]{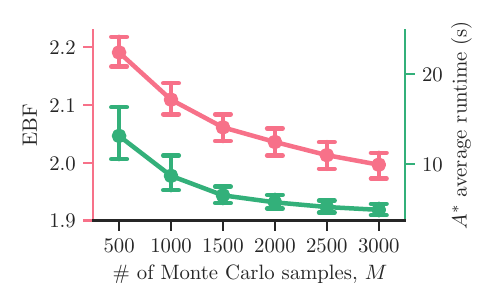}
    	}
    	\subfloat[Efficiency vs. $k$]{
    		 \centering
    		 \includegraphics[width=0.33\linewidth]{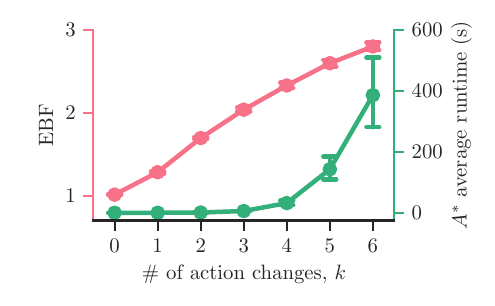}
    	}
     \caption{Computational efficiency of our method under different configurations, as measured by the effective branching factor (pink-left axis) and the runtime of the $A^*$ algorithm (green-right axis).
    In Panel (a), 
    we set $M=2000$ and $k=3$.
    In Panel (b),
    we set $L_h=1.0$ and $k=3$.
    In Panel (c),
    we set $L_h=1.0$ and $M=2000$.
    In all panels, we set $L_\phi=0.1$ and error bars indicate $95\%$ confidence intervals over $200$ executions of the $A^*$ algorithm for 
    $200$ patients with horizon $T=12$.
    }
     \label{fig:performance}
     \vspace{-3mm}
\end{figure}

\xhdr{Results}
We start by evaluating the computational efficiency of our method against (i) the Lipschitz constant of the location network $L_h$, (ii) the number of Monte Carlo samples $M$ used to generate the anchor set $\Scal_\dagger$, and (iii) the number of actions $k$ that can differ from the observed ones.
We measure efficiency using running time and the effective branching factor (EBF)~\cite{russel2013artificial}.
The EBF is defined as a real number $b\geq 1$ such that the number of nodes expanded by $A^*$ is equal to $1+b+b^2+\dots+b^T$, where $T$ is the horizon, and values close to $1$ indicate that the heuristic function is the most efficient in guiding the search.
Figure~\ref{fig:performance} summarizes the results, which show that our method maintains overall a fairly low running time that 
decreases with the number of Monte Carlo samples $M$ used for the generation of the anchor set and
increases with the Lipschitz constant $L_h$ and the number of action changes $k$.
That may not come as a surprise since, as $L_h$ increases, the heuristic function becomes more loose, and as $k$ increases, the size of the search 
space increases exponentially.
To put things in perspective, for a problem instance with $L_h=1.0$, $k=3$ and horizon $T=12$, 
the $A^*$ search led by our heuristic function is effectively equivalent to an exhaustive search over a full tree with $2.1^{12}\approx7{,}355$ leaves while the corresponding search space of our problem consists of more than $3$ million action sequences---more than $3$ million paths to reach from the root node to the goal 
node.
%
%

Next, we investigate to what extent the counterfactual action sequences generated by our method would have led the patients in our dataset to better outcomes.
For each patient, we measure their counterfactual improvement---the relative decrease in cumulative SOFA score between the counterfactual and the observed episode.
Figures~\ref{fig:insights_a} and~\ref{fig:insights_b} summarize the results, which show that: 
(i) the average counterfactual improvement shows a diminishing increase as $k$ increases;
(ii) the median counterfactual improvement is only $5\%$, indicating that, the treatment choices made by the clinicians for most of the patients were close to optimal, even with the benefit of hindsight;
and 
(iii) there are $176$ patients for whom our method suggests that a different sequence of actions would have led to an outcome that is at least $15\%$ better.
That said, we view patients at the tail of the distribution as ``interesting cases'' that should be deferred to domain experts for closer inspection, and we present one such example in Fig.~\ref{fig:insights_c}.
In this example, our method suggests that, had the patient received an early higher dosage of intravenous fluids while some of the later administered fluids where replaced by vasopressors, their SOFA score would have been lower across time.
Although we present this case as purely anecdotal, the counterfactual episode is plausible, since there are indications of decreased mortality when intravenous fluids are administered at the early stages 
of a septic shock~\cite{waechter2014interaction}.

\begin{figure}[t]
	\centering
	\subfloat[Average c/f improvement vs. $k$]{
    		 \centering
             \label{fig:insights_a}
         \includegraphics[width=0.33\linewidth]{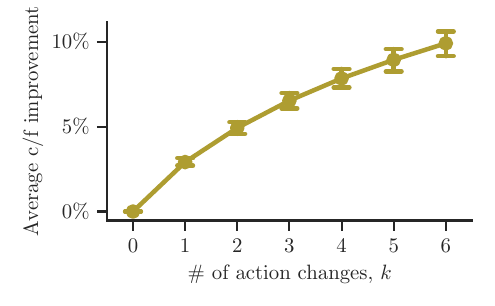}
    }
    	\subfloat[Distribution of c/f improvement]{
    		 \centering
             \label{fig:insights_b}
    		 \includegraphics[width=0.33\linewidth]{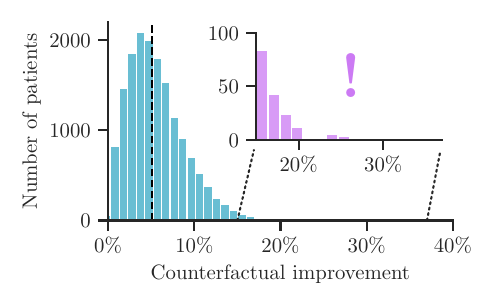}
    	}
    	\subfloat[Observed vs. c/f episode]{
    		 \centering
             \label{fig:insights_c}
    		 \includegraphics[width=0.33\linewidth]{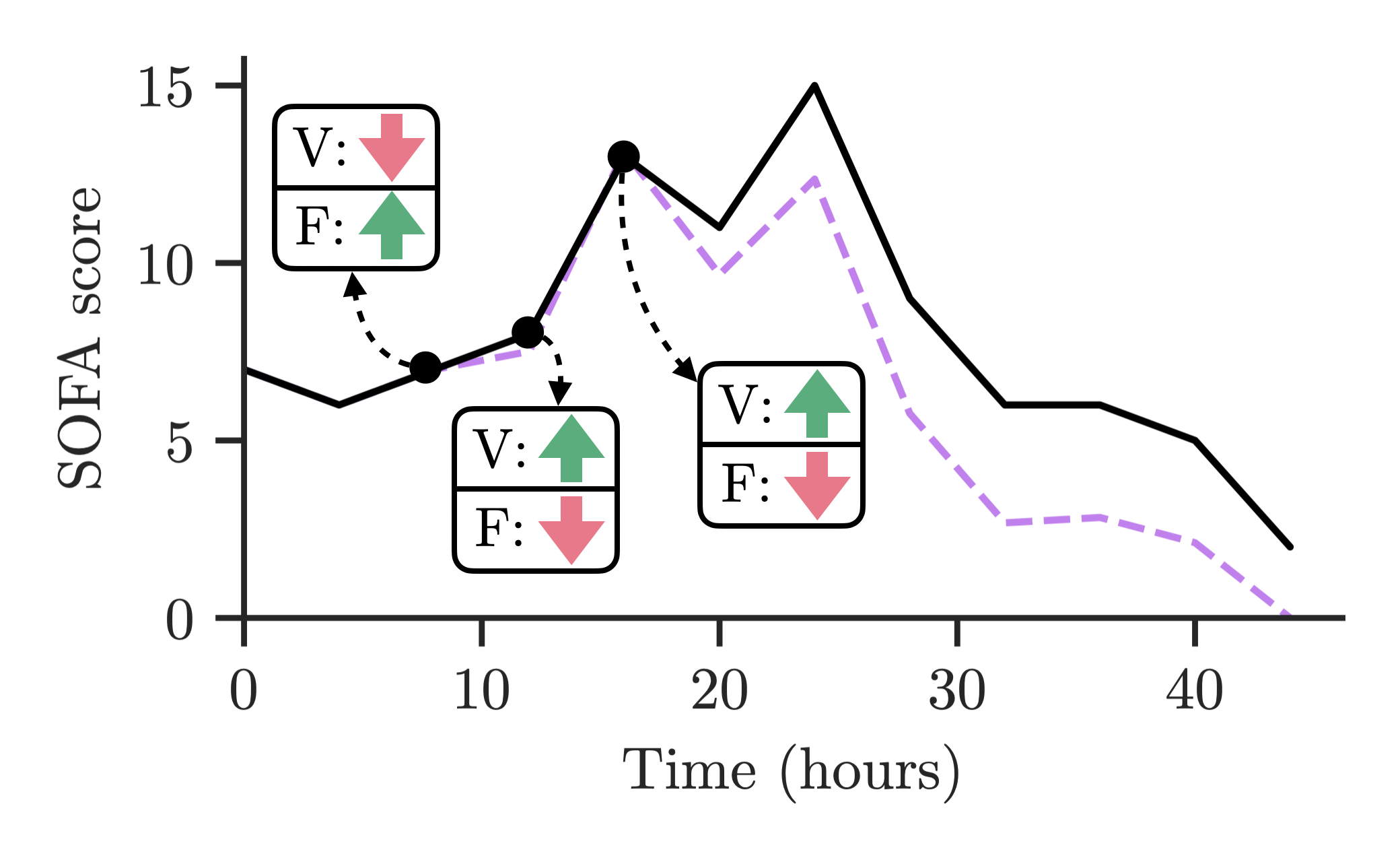}
    	}
     \caption{Retrospective analysis of patients' episodes.
     Panel (a) shows the average counterfactual improvement as a function of $k$ for a set of $200$ patients with horizon $T=12$, where error bars indicate $95\%$ confidence intervals.
     Panel (b) shows the distribution of counterfactual improvement across all patients for $k=3$, where the dashed vertical line indicates the median.
     Panel (c) shows the observed (solid) and counterfactual (dashed) SOFA score across time for a patient who presents a $19.9\%$ counterfactual improvement when $k=3$.
    Upward (downward) arrows indicate action changes that suggest a higher (lower) dosage of vasopressors (V) and fluids (F). 
    In all panels, we set $M=2000$}
     \label{fig:insights}
     \vspace{-3mm}
\end{figure}

\vspace*{-2mm}
\section{Conclusions}
\label{sec:conclusions}
\vspace*{-2mm}
In this paper, we have introduced the problem of finding counterfactually optimal action sequences in sequential decision making processes with continuous state dynamics.
We showed that the problem is NP-hard and, to tackle it, we introduced a search method based on the $A^*$ algorithm that is guaranteed to find the optimal solution, with the caveat that its runtime can vary depending on the problem instance.
Lastly, using real clinical data, we have found that our method is very efficient in practice, and it has the potential to offer interesting insights to domain experts by highlighting episodes and time-steps of interest for further inspection.

Our work opens up many interesting avenues for future work.
For example, it would be interesting to develop algorithms with approximation guarantees that run in polynomial time, at the expense of not achieving strict 
counterfactual optimality.
Moreover, since the practicality of methods like ours relies on the assumption that the SCM describing the environment is accurate, it would be interesting 
to develop methods to learn SCMs that align with human domain knowledge.
Finally, it would be interesting to validate our method using real datasets from other applications and carry out user studies in which the counterfactual action 
sequences found by our method are systematically evaluated by the human experts (\eg, clinicians) who took the observed actions.

\xhdr{Acknowledgements} Tsirtsis and Gomez-Rodriguez acknowledge support from the European Research Council (ERC) under the European Union’s Horizon 2020 research and innovation programme (grant agreement No. 945719).

\bibliographystyle{unsrt}
\bibliography{counterfactual-continuous-mdp}

\newpage
\appendix
\section{Further related work}
\label{app:related}



\xhdr{Counterfactual reasoning and reinforcement learning}
As mentioned in Section~\ref{sec:introduction}, there is a closely related line of work~\cite{buesing2018woulda,oberst2019counterfactual,lu2020sample,tsirtsis2021counterfactual} that focuses on the development of machine learning methods that employ elements of counterfactual reasoning to improve or to retrospectively analyze decisions in sequential settings.
Buesing et al.~\cite{buesing2018woulda} use SCMs to express the transition dynamics in Partially Observable MDPs (POMDPs), and they propose a method to efficiently compute a policy based on counterfactual realizations of logged episodes.
Lu et al.~\cite{lu2020sample} adopt a similar modeling framework, and they propose a counterfactual data augmentation approach to speed up standard Q-learning.
Oberst and Sontag~\cite{oberst2019counterfactual} introduce the Gumbel-Max SCM to express the dynamics of an arbitrary discrete POMPD, and they develop a method for counterfactual off-policy evaluation to identify episodes where a given alternative policy would have achieved a higher reward.
However, none of these works aims to find an action sequence, close to the observed sequence of a particular episode, that is counterfactually optimal.

\xhdr{Planning in continuous-state MDPs}
Our work has additional connections to pieces of work that aim to approximate the optimal value function in an MDP with continuous states and a finite horizon~\cite{bertsekas1975convergence,li2005lazy,dearden2004dynamic}.
Therein, the work most closely related to ours is the one by Bertsekas~\cite{bertsekas1975convergence}.
It shows that, under a Lipschitz continuity assumption on the transition dynamics, a value function computed via value iteration in an MDP with discretized states, converges to the optimal value function of the original (continous-state) MDP as the discretization becomes finer.
Although some of the proof mechanics of this work are similar to ours, the contributions are orthogonal, as we do not employ any form of discretization, and we leverage the Lipschitz continuity assumption to compute optimal action sequences in continuous states using the $A^*$ algorithm.

\xhdr{Counterfactual reasoning and explainability}
Our work has ties to pieces of work that use forms of counterfactual reasoning as a tool towards learning explainable machine learning models.
For example, Madumal et al.~\cite{madumal2020explainable} propose to express the action selection process of a reinforcement learning agent as a causal graph, and they use it to generate explanations for the agent's chosen actions.
Bica et al.~\cite{bica2020learning} introduce an inverse reinforcement learning approach to learn an interpretable reward function from expert demonstrated behavior that is expressed in terms of preferences over potential (counterfactual) outcomes.
Moreover, our work is broadly related to the work on counterfactual explanations (in classification) that aims to find a minimally different set of features that would have led to a different outcome, in settings where single decisions are taken~\cite{karimi2020algorithmic,tsirtsis2020decisions}.

\newpage

\section{Supporting notation table}
\label{app:notation}

\begin{table}[ht]
  \centering
  \caption{It summarizes the most important notation used in the main body of the paper.}
\label{tab:notation}
\begin{tabular}{|c|c|} \hline
\textbf{Symbol/Definition} & \textbf{Description} \\ \hline
$T$ & Time horizon \\ \hline
$\Sbb_t\in\Scal, A_t\in\Acal$ & State and action (random variables) at time $t$ \\ \hline
$\sbb_t, a_t$ & State and action (observed values) at time $t$ \\ \hline
$R: \Scal \times \Acal \rightarrow \RR$ & Reward function \\ \hline
$\tau = \{(\sbb_t, a_t)\}_{t=0}^{T-1}$ & Observed episode \\ \hline
$o(\tau) = \sum_{t=0}^{T-1} R(\sbb_t, a_t)$ & Outcome of episode $\tau$ \\ \hline
$\Ub_t \in \Ucal$ & Transition noise (random variable) at time $t$\\ \hline
$\ub_t$ & Transition noise (inferred value) at time $t$\\ \hline
$g_S:\Scal \times \Acal \times \Ucal \rightarrow \Scal$ & Transition mechanism (def. in Eq.~\ref{eq:scm_trans})\\ \hline
$K_{a,\ub}$ & Lipschitz constant of $g_S$ under $A_t=a$ and $\Ub_t=\ub$\\ \hline
$C_a$ & Lipschitz constant of $R$ under $A_t=a$ \\ \hline
$\Scal^+ = \Scal \times [T-1]$ & Enhanced state space \\ \hline
$(\sbb,l) \in \Scal^+$ & Enhanced state representing a c/f state $\sbb$ after $l$ action changes \\ \hline
$F^+_{\tau,t} : \Scal^+ \times \Acal \rightarrow \Scal^+$ & Time-dependent transition function (def. in Eq.~\ref{eq:mdp_func}) \\ \hline
$\tau' = \{(\sbb'_t, a'_t)\}_{t=0}^{T-1}$ & Counterfactual episode \\ \hline
$o^+(\tau') = \sum_{t=0}^{T-1} R(\sbb'_t, a'_t)$ & Counterfactual outcome of episode $\tau'$ \\ \hline
$V_\tau (\sbb, l, t)$ & Max. c/f reward achievable for $\tau$ from enhanced state $(\sbb,l)$ at time $t$\\ \hline
$(\sbb_a, l_a) = F^+_{\tau,t}((\sbb,l),a)$ & Enhanced state resulting from $(\sbb,l)$ with action $a$\\ \hline
$v=(\sbb,l,t)$ & Node in the search graph of the $A^*$ algorithm \\ \hline
$v_0 = (\sbb,0,0)$ & Root node \\ \hline
$v_T = (\sbb_\emptyset, k, T)$ & Goal node \\ \hline
$r_v$ & Reward accumulated along the path that $A^*$ followed from $v_0$ to $v$ \\ \hline
$L_t$ & Lipschitz constant of $V_\tau (\sbb, l, t)$ \\ \hline
$\hat{V}_\tau(\sbb, l, t)$ & Heuristic function approximating $V_\tau (\sbb, l, t)$ \\ \hline
$\Scal_\dagger$ & Anchor set \\ \hline
\end{tabular}
\end{table}

\newpage

\section{Counterfactual identifiability of element-wise bijective SCMs}\label{app:identifiability}

In this section, we show that \emph{element-wise bijective} SCMs, a subclass of bijective SCMs which we formally define next, are counterfactually 
identifiable.
%
%
\begin{definition}\label{def:element-bijective}
 An SCM $\Ccal$ is element-wise bijective iff it is bijective
%
%
%
and there exist functions $g_{S,i}: \RR \times \Acal \times \RR \rightarrow \RR$ with $i\in\{1,\ldots,D\}$ such that, for every combination of $\sbb_{t+1}, \sbb_t, a_t, \ub_t$ with $\sbb_{t+1}=g_S(\sbb_t, a_t, \ub_t)$, it holds that $s_{t+1,i} = g_{S,i}(\sbb_t, a_t, u_{t,i})$ for $i\in\{1,\ldots,D\}$.
\end{definition}

Under our assumption that the transition mechanism $g_S$ is continuous with respect to its third argument, it is easy to see that, for any 
element-wise bijective SCM, the functions $g_{S,i}$ are always strictly monotonic functions of the respective $u_{t,i}$.
Based on this observation, we have the following theorem of counterfactual identifiability:
\begin{theorem}\label{thm:identifiability}
Let $\Ccal$ and $\Mcal$ be two element-wise bijective SCMs with transition mechanisms $g_S$ and $h_S$, respectively, and, 
%
%
%
%
%
for any observed transition $(\sbb_t, a_t, \sbb_{t+1})$, let $\ub_t = g_S^{-1}(\sbb_t, a_t, \sbb_{t+1})$ and $\tilde{\ub}_t = h_S^{-1}(\sbb_t, a_t, \sbb_{t+1})$.
Moreover, given any $\sbb\in\Scal, a\in\Acal$, let $\sbb' = g_S(\sbb, a, \ub_t)$ and $\sbb'' = h_S(\sbb, a, \tilde{\ub}_t)$.
If $P^\Ccal(\Sbb_{t+1}\given \Sbb_t=\sbb, A_t=a)$ = $P^\Mcal(\Sbb_{t+1}\given \Sbb_t=\sbb, A_t=a)$ for all $\sbb\in\Scal, a\in\Acal$, 
it must hold that $\sbb' = \sbb''$.
\end{theorem}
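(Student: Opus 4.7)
The plan is to reduce the argument to the scalar case coordinate by coordinate, leveraging the element-wise bijective structure. For each $i \in \{1,\ldots,D\}$, define the scalar maps $\phi_{i}(u) := g_{S,i}(\sbb_t, a_t, u)$ and $\psi_{i}(u) := h_{S,i}(\sbb_t, a_t, u)$. Since $g_S$ and $h_S$ are bijective in their last argument and continuous by assumption, each $\phi_i$ and $\psi_i$ is a strictly monotone continuous function of a single real variable. This strict monotonicity is what makes scalar bijective SCMs counterfactually identifiable and will be the key workhorse of the proof.

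Next, I would match marginals. By assumption $P^{\Ccal}(\Sbb_{t+1} \given \Sbb_t=\sbb_t, A_t=a_t) = P^{\Mcal}(\Sbb_{t+1} \given \Sbb_t=\sbb_t, A_t=a_t)$, so in particular their $i$-th coordinate marginals coincide; call the common CDF $F_i$. Because $\phi_i$ is strictly monotone, the CDF of $S_{t+1,i}$ under $\Ccal$ equals $F^{\Ccal}_{U_{t,i}} \circ \phi_i^{-1}$ (up to a reflection if $\phi_i$ is decreasing), and analogously for $\Mcal$. Let $q := F_i(s_{t+1,i})$. Then the reconstructed noise $u_{t,i} = \phi_i^{-1}(s_{t+1,i})$ is exactly the $q$-quantile of $U_{t,i}$ under $\Ccal$, and $\tilde{u}_{t,i} = \psi_i^{-1}(s_{t+1,i})$ is the $q$-quantile of $\tilde{U}_{t,i}$ under $\Mcal$.

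Now apply the same reasoning to the counterfactual query $(\sbb,a)$. Setting $\bar{\phi}_i(u) := g_{S,i}(\sbb, a, u)$ and $\bar{\psi}_i(u) := h_{S,i}(\sbb, a, u)$, both are strictly monotone scalar functions, and the hypothesis implies that the $i$-th marginal of $\Sbb_{t+1}$ under $P(\cdot \given \Sbb_t = \sbb, A_t = a)$ is the same under $\Ccal$ and $\Mcal$; denote its CDF $\bar{F}_i$. By the monotone pushforward identity, $s'_i = \bar{\phi}_i(u_{t,i})$ is the $q$-quantile of $\bar{F}_i$, and $s''_i = \bar{\psi}_i(\tilde{u}_{t,i})$ is also the $q$-quantile of $\bar{F}_i$. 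Because $\bar{F}_i$ is continuous and the images of $\bar{\phi}_i$, $\bar{\psi}_i$ cover the support of this CDF, the $q$-quantile is uniquely determined, giving $s'_i = s''_i$. Since this holds for every $i$, we conclude $\sbb' = \sbb''$.

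The main obstacle I anticipate is the careful handling of the quantile matching step, specifically ensuring that strict monotonicity of the relevant CDFs holds wherever needed. Potential pitfalls include: (i) the two transition mechanisms $\phi_i$ and $\psi_i$ might be monotonic in opposite directions, which only requires a mild bookkeeping change (replace $q$ by $1-q$ for the decreasing one); (ii) flat regions of $\bar{F}_i$ arising from gaps in the support of the noise prior, which one must rule out by noting that such flat regions correspond to values of $\sbb$ never realized by $\bar{\phi}_i$, hence not to our query point; and (iii) the subtlety that the noise components $U_{t,1},\ldots,U_{t,D}$ need not be independent. The last point is actually benign here, because the footnote's conditional independence condition $S_{t+1,i} \independent U_{t,j} \given U_{t,i}, \Sbb_t, A_t$ means each output coordinate depends only on its own noise component given the inputs, so the coordinate-wise scalar argument suffices without ever having to reason about the full joint noise distribution.
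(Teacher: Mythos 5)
Your proof is correct, and its core engine is the same as the paper's: map the observed value $s_{t+1,i}$ back to a noise quantile via the strict monotonicity of $g_{S,i}(\sbb_t,a_t,\cdot)$ and $h_{S,i}(\sbb_t,a_t,\cdot)$, push that quantile forward under the query pair $(\sbb,a)$, and conclude that $s'_i$ and $s''_i$ occupy the same quantile of the same distribution. Where you genuinely diverge is in how the multivariate case is reduced to the scalar one: the paper runs an induction over coordinates, using the law of total probability to match the \emph{conditional} distributions of $S_{t+1,n+1}\given \Sbb_{t+1,\leq n},\Sbb_t,A_t$ and translating the conditioning on $\Sbb_{t+1,\leq n}$ into conditioning on $\Ub_{t,\leq n}$, whereas you work purely with the coordinate-wise \emph{marginals}. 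Your justification for why marginals suffice is sound: under element-wise bijectivity, $S_{t+1,i}$ is a function of $U_{t,i}$ alone given $(\Sbb_t,A_t)$, so its marginal law is determined by the marginal of $U_{t,i}$, and the reconstructed $u_{t,i}$ depends only on $s_{t+1,i}$; the possible dependence among $U_{t,1},\ldots,U_{t,D}$ therefore never enters. This makes your argument shorter and avoids the conditional-distribution bookkeeping, at the cost of proving a slightly weaker intermediate fact (marginal rather than conditional quantile matching), which is all the theorem needs. One caveat applies equally to both proofs: the final step from ``$s'_i$ and $s''_i$ are the same quantile of $\bar F_i$'' to ``$s'_i=s''_i$'' requires ruling out flat regions of $\bar F_i$ between the two values; the paper waves at bijectivity and you wave at the support of the noise prior, and neither treatment is fully airtight, so if you want to be rigorous there you should spell out that $u_{t,i}$ and $\tilde u_{t,i}$ lie in the supports of the respective noise marginals (being preimages of the observed $s_{t+1,i}$) and that zero mass between $s'_i$ and $s''_i$ would contradict this.
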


\begin{proof}
  We prove the theorem by induction, starting by establishing the base case $s'_1 = s''_1$.
  Without loss of generality, assume that both $g_{S,1}$ and $h_{S,1}$ are strictly increasing with respect to their third argument.
  Since the two SCMs entail the same transition distributions, we have that
  \begin{multline*}
    P^\Ccal\left(S_{t+1,1} \leq s_{t+1,1} \given \Sbb_t=\sbb_t, A_t=a_t\right) = P^\Mcal\left(S_{t+1,1} \leq s_{t+1,1} \given \Sbb_t=\sbb_t, A_t=a_t\right) \stackrel{(*)}{\Rightarrow} \\
    P^\Ccal\left(g_{S,1}\left(\sbb_t, a_t, U_{t,1}\right) \leq g_{S,1}\left(\sbb_t, a_t, u_{t,1}\right)\right) = P^\Mcal\left(h_{S,1}\left(\sbb_t, a_t, U_{t,1}\right) \leq h_{S,1}\left(\sbb_t, a_t, \tilde{u}_{t,1}\right)\right) \stackrel{(**)}{\Rightarrow} \\
    P^\Ccal\left(U_{t,1} \leq u_{t,1}\right) = P^\Mcal\left(U_{t,1} \leq \tilde{u}_{t,1} \right),
  \end{multline*}
  where $(*)$ holds because both SCMs are element-wise bijective, and $(**)$ holds because $g_{S,1}$ and $h_{S,1}$ are increasing with respect to their third argument.
  Similarly, we have that
  \begin{align*}
    P^\Ccal\left(S_{t+1,1} \leq s'_1 \given \Sbb_t=\sbb, A_t=a\right)
    &= P^\Ccal\left(g_{S,1}\left(\sbb, a, U_{t,1}\right) \leq g_{S,1}\left(\sbb, a, u_{t,1}\right)\right) \\
    &\stackrel{(\star)}{=} P^\Ccal\left(U_{t,1} \leq u_{t,1}\right) \\
    &\stackrel{(\star\star)}{=} P^\Mcal\left(U_{t,1} \leq \tilde{u}_{t,1} \right) \\
    &\stackrel{(\dagger)}{=} P^\Mcal\left(h_{S,1}\left(\sbb, a, U_{t,1}\right) \leq h_{S,1}\left(\sbb, a, \tilde{u}_{t,1}\right)\right) \\
    &= P^\Mcal\left(S_{t+1,1} \leq s''_1 \given \Sbb_t=\sbb, A_t=a\right) \\
    &= P^\Ccal\left(S_{t+1,1} \leq s''_1 \given \Sbb_t=\sbb, A_t=a\right),
  \end{align*}
  where in $(\star), (\dagger)$ we have used the monotonicity of $g_S$ and $h_S$, and $(\star\star)$ follows from the previous result.
  The last equality implies that $s'_1$ and $s''_1$ correspond to the same quantile of the distribution for $S_{t+1,1}\given \Sbb_t=\sbb, A_t=a$.
  Therefore, it is easy to see that $s'_1 = s''_1$ since the opposite would be in contradiction to $g_{S,1}$ being bijective.
  Note that, we can reach that conclusion irrespective of the direction of monotonicity of $g_{S,1}$ and $h_{S,1}$, since any change in the direction of the inequalities happening at step $(**)$ is reverted at steps $(\star)$ and $(\dagger)$.

  Now, starting from the inductive hypothesis that $s'_i = s''_i$ for all $i\in\{1,\ldots,n\}$ with $n<D$, we show the inductive step, \ie, $s'_{n+1}=s''_{n+1}$.
  Again, without loss of generality, assume that both $g_{S,n+1}$ and $h_{S,n+1}$ are strictly increasing with respect to their last argument.
  Note that, the two SCMs entail the same transition distributions, \ie, the same joint distributions for $\Sbb_{t+1} \given \Sbb_t, A_t$.
  Following from the law of total probability, they also entail the same conditional distributions for $S_{t+1, n+1} \given \Sbb_{t+1, \leq n}, \Sbb_t, A_t$, where we use the notation $\xb_{\leq n}$ to refer to a vector that contains the first $n$ elements of a $D$-dimensional vector $\xb$.
  Therefore, we have that
  \begin{align*}
    & P^\Ccal\left(S_{t+1,n+1} \leq s_{t+1,n+1} \given \Sbb_{t+1, \leq n}=\sbb_{t+1, \leq n}, \Sbb_t=\sbb_t, A_t=a_t\right) = \\
    &\qquad\qquad\qquad P^\Mcal\left(S_{t+1,n+1} \leq s_{t+1,n+1} \given \Sbb_{t+1, \leq n}=\sbb_{t+1, \leq n}, \Sbb_t=\sbb_t, A_t=a_t\right) \stackrel{(*)}{\Rightarrow}
  \end{align*}  
  \begin{align*}
    & P^\Ccal\left(g_{S,n+1}\left(\sbb_t, a_t, U_{t,n+1}\right) \leq g_{S,n+1}\left(\sbb_t, a_t, u_{t,n+1}\right) \given \Ub_{t,\leq n} = \ub_{t,\leq n}\right) \\ 
    &\qquad\qquad\qquad= P^\Mcal\left(h_{S,n+1}\left(\sbb_t, a_t, U_{t,n+1}\right) \leq h_{S,n+1}\left(\sbb_t, a_t, \tilde{u}_{t,n+1}\right) \given \Ub_{t,\leq n} = \tilde{\ub}_{t,\leq n}\right) \stackrel{(**)}{\Rightarrow} \\
    & P^\Ccal\left(U_{t,n+1} \leq u_{t,n+1} \given \Ub_{t,\leq n} = \ub_{t,\leq n} \right) = P^\Mcal\left(U_{t,n+1} \leq \tilde{u}_{t,n+1} \given \Ub_{t,\leq n} = \tilde{\ub}_{t,\leq n} \right),
  \end{align*}

  where for the first equality we have used the inductive hypothesis, $(*)$ holds because both SCMs are element-wise bijective, and $(**)$ holds because $g_{S,n+1}$ and $h_{S,n+1}$ are increasing with respect to their third argument.
  Similarly, we get that
  \begin{align*}
    &P^\Ccal\left(S_{t+1,n+1} \leq s'_{n+1} \given \Sbb_{t+1, \leq n}=\sbb_{t+1, \leq n}, \Sbb_t=\sbb, A_t=a\right) \\
    &\,\,\,= P^\Ccal\left(g_{S,n+1}\left(\sbb, a, U_{t,n+1}\right) \leq g_{S,n+1}\left(\sbb, a, u_{t,n+1}\right) \given g_{S,\leq n}\left(\sbb, a, \Ub_{t,\leq n}\right) = g_{s,\leq n}\left(\sbb, a, \ub_{t,\leq n}\right) \right) \\
    &\,\,\,\stackrel{(\star)}{=} P^\Ccal\left(U_{t,n+1} \leq u_{t,n+1} \given \Ub_{t,\leq n} = \ub_{t,\leq n}\right) \\
    &\,\,\,\stackrel{(\star\star)}{=} P^\Mcal\left(U_{t,n+1} \leq \tilde{u}_{t,n+1} \given \Ub_{t,\leq n} = \tilde{\ub}_{t,\leq n}\right) \\
    &\,\,\,\stackrel{(\dagger)}{=} P^\Mcal\left(h_{S,n+1}\left(\sbb, a, U_{t,1}\right) \leq h_{S,n+1}\left(\sbb, a, \tilde{u}_{t,1}\right) \given h_{S,\leq n}\left(\sbb, a, \Ub_{t,\leq n}\right) = h_{s,\leq n}\left(\sbb, a, \tilde{\ub}_{t,\leq n}\right)\right) \\
    &\,\,\,= P^\Mcal\left(S_{t+1,n+1} \leq s''_{n+1} \given \Sbb_{t+1, \leq n}=\Sbb_{t+1, \leq n}, \Sbb_t=\sbb, A_t=a\right) \\
    &\,\,\,= P^\Ccal\left(S_{t+1,n+1} \leq s''_{n+1} \given \Sbb_{t+1, \leq n}=\sbb_{t+1, \leq n}, \Sbb_t=\sbb, A_t=a\right),
  \end{align*}
  where in $(\star), (\dagger)$ we have used the invertibility and monotonicity of $g_S$ and $h_S$, and $(\star\star)$ follows from the previous result.
  With the same argument as in the base case, the last equality implies that $s'_{n+1} = s''_{n+1}$.
  That concludes the proof.
\end{proof}

\newpage

\newpage
\section{Proofs}
\label{app:proofs}

\subsection{Proof of Theorem~\ref{thm:hardness}}\label{app:hardness}

    We prove the hardness of our problem as defined in Eq.~\ref{eq:statement} by performing a reduction from the partition problem~\cite{karp1972reducibility}, which is known to be NP-Complete.
    In the partition problem, we are given a multiset of $B$ positive integers $\Vcal = \{v_1, \ldots, v_B\}$ and the goal is to decide whether there is a partition of $\Vcal$ into two subsets $\Vcal_1, \Vcal_2$ with $\Vcal_1 \cap \Vcal_2 = \emptyset$ and $\Vcal_1 \cup \Vcal_2 = \Vcal$, such that their sums are equal, \ie, $\sum_{v_i \in \Vcal_1} v_i = \sum_{v_j \in \Vcal_2} v_j$.

    Consider an instance of our problem where $\Scal = \Ucal = \RR^2$, $\Acal$ contains $2$ actions $a_\text{diff}, a_\text{null}$ and the horizon is $T=B+1$.
    Let $\Ccal$ be an element-wise bijective SCM with arbitrary prior distributions $P^\Ccal(\Ub_t)$ such that their support is on $\RR^2$ and a transition mechanism $g_S$ such that
    \begin{equation}\label{eq:transition}
      g_S(\Sbb_t, a_\text{diff}, \Ub_t) =
      \begin{bmatrix}
        S_{t,1} - S_{t,2} \\
        0
      \end{bmatrix}
      + \Ub_t
      \quad\text{and}\quad
      g_S(\Sbb_t, a_\text{null}, \Ub_t) =
      \begin{bmatrix}
        S_{t,1} \\
        0
      \end{bmatrix}
      + \Ub_t.
    \end{equation}
    Moreover, assume that the reward function is given by 
    \begin{align}\label{eq:reward}
      \begin{split}
      R(\Sbb_t, a_\text{diff}) = R(\Sbb_t, a_\text{null}) &=
        -\max\left(0, S_{t,1} - \frac{sum(\Vcal)}{2} - S_{t,2}\frac{sum(\Vcal)}{2}\right) \\
        &\qquad - \max\left(0, \frac{sum(\Vcal)}{2} - S_{t,1} - S_{t,2}\frac{sum(\Vcal)}{2}\right)
        ,
      \end{split}
    \end{align}
    where $sum(\Vcal)$ is the sum of all elements $\sum_{i=1}^B v_i$.
    Note that, the SCM $\Ccal$ defined above is Lipschitz-continuous as suggested by the following Lemma (refer to Appendix~\ref{app:nplem-1} for a proof).

    \begin{lemma}\label{lem:np-helper}
      The SCM $\Ccal$ defined by Equations~\ref{eq:transition},~\ref{eq:reward} is Lipschitz-continuous according to Definition~\ref{def:lipschitz_scm}.
    \end{lemma}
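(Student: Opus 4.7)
The plan is to verify the two Lipschitz conditions of Definition~\ref{def:lipschitz_scm} directly, action-by-action, using only elementary inequalities. Let me write $c = \text{sum}(\Vcal)/2$ for brevity.

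First I would check Lipschitz continuity of the transition mechanism $g_S$. For fixed $\ub \in \Ucal$, the noise term $\ub$ cancels in the difference $g_S(\sbb, a, \ub) - g_S(\sbb', a, \ub)$, so the bound will not depend on $\ub$. For the action $a_\text{null}$, the difference is simply $(s_1 - s'_1,\, 0)^\top$, and its Euclidean norm is $|s_1 - s'_1| \leq \norm{\sbb - \sbb'}$, so $K_{a_\text{null}, \ub} = 1$ works. For the action $a_\text{diff}$, the difference is $(s_1 - s_2 - s'_1 + s'_2,\, 0)^\top$; bounding its norm by $|s_1 - s'_1| + |s_2 - s'_2| \leq \sqrt{2}\norm{\sbb - \sbb'}$ via the triangle and Cauchy--Schwarz inequalities yields $K_{a_\text{diff}, \ub} = \sqrt{2}$.

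Next I would verify Lipschitz continuity of $R$. Since $R(\cdot, a_\text{diff}) = R(\cdot, a_\text{null})$, it suffices to handle a single expression of the form $-\max(0, x) - \max(0, y)$ composed with affine maps of $\sbb$. The key fact I would invoke is that $\max(0, \cdot) : \RR \to \RR$ is $1$-Lipschitz. Combined with the triangle inequality and the bound $|\alpha(s_1 - s'_1) + \beta(s_2 - s'_2)| \leq \sqrt{\alpha^2 + \beta^2}\,\norm{\sbb - \sbb'}$ (Cauchy--Schwarz) applied to each affine argument, I obtain $|R(\sbb, a) - R(\sbb', a)| \leq 2\sqrt{1 + c^2}\,\norm{\sbb - \sbb'}$, so $C_a = 2\sqrt{1 + c^2}$ works for both actions.

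There is no conceptually hard step here; the argument is essentially bookkeeping of constants. The only point that deserves care is making explicit that the Lipschitz constants $K_{a, \ub}$ can legitimately be chosen independent of $\ub$ in this construction (as required by Definition~\ref{def:lipschitz_scm}, which permits $\ub$-dependence but does not require it), which is immediate because $\ub$ appears only additively in $g_S$. Putting the four constants together establishes the claim.
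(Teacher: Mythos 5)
Your proof is correct and follows essentially the same route as the paper's: a direct, action-by-action verification giving $K_{a_\text{null},\ub}=1$ and $K_{a_\text{diff},\ub}=\sqrt{2}$, followed by the observation that $\max(0,\cdot)$ is $1$-Lipschitz, which reduces the reward bound to the two affine arguments. The only substantive difference is the constant: you obtain $C_a=2\sqrt{1+c^2}$ via Cauchy--Schwarz, which is the correct (indeed sharp) constant for a linear map with gradient $(1,-c)$, whereas the paper's algebra yields $C_a=2\sqrt{1+c}$ with $c=sum(\Vcal)/2$, which for large $c$ is actually too small to be a valid Lipschitz constant --- a slip that does not affect the truth of the lemma, since only the existence of some finite constant is required.
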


    Now, assume that the counterfactual action sequence can differ in an arbitrary number of actions from the action sequence in the observed episode $\tau$, \ie, $k=T$ and, let the observed action sequence be such that $a_t=a_\text{null}$ for $t\in\{0, \ldots, T-1\}$.
    Lastly, let the initial observed state be $\sbb_0 = [0, v_1]$, the observed states $\{\sbb_t\}_{t=1}^{T-2}$ be such that $\sbb_t=\left[\sum_{i=1}^t v_i , v_{t+1}\right]$ for $t\in\{1, \ldots, T-2\}$ and the last observed state be $\sbb_{T-1} = \left[ sum(\Vcal), 0 \right]$.
    Then, 
    it is easy to see that the noise variables $\Ub_t$ have posterior distributions with a point mass on the respective values
    \begin{equation*}
      \ub_t = 
      \begin{bmatrix}
        v_{t+1} \\
        v_{t+2}
      \end{bmatrix}
      \,\, \text{for } t\in\{0,\ldots,T-3\}
      \qquad \text{and} \qquad 
      \ub_{T-2} = 
      \begin{bmatrix}
        v_{T-1} \\
        0
      \end{bmatrix}.
    \end{equation*}

    Note that, for all $t\in\{1, \ldots, T-2\}$, we have $0\leq s_{t,1}< sum(\Vcal)$ and $s_{t,2}\geq 1$, hence the immediate reward according to Eq.~\ref{eq:reward} is equal to $0$.
    Consequently, the outcome of the observed episode $\tau$ is $o^+(\tau) = R(\sbb_{T-1}, a_\text{null}) = -\max(0,\frac{sum(\Vcal)}{2}) - \max(0,-\frac{sum(\Vcal)}{2}) = -\frac{sum(\Vcal)}{2}$.

    Next, we will characterize the counterfactual outcome $o(\tau')$ of a counterfactual episode $\tau'$ with a sequence of states $\{\sbb'_t\}_{t=0}^{T-1}$ resulting from an alternative sequence of actions $\{a'_t\}_{t=0}^{T-1}$.
    Let $\Dcal'_t$, $\Ncal'_t$ denote the set of time steps until time $t$, where the actions taken in a counterfactual episode $\tau'$ are $a_\text{diff}$ and $a_\text{null}$ respectively.
    Formally, $\Dcal'_t = \{t'\in\{0,\ldots,t\} : a'_{t'}=a_\text{diff}\}$, $\Ncal'_t = \{t'\in\{0,\ldots,t\} : a'_{t'}=a_\text{null}\}$.
    Then, as an intermediate result, we get the following Lemma (refer to Appendix~\ref{app:nplem-2} for a proof).
    
    \begin{lemma}\label{lem:np-helper-2}
      It holds that $s'_{t, 1}=\sum_{t'\in\Ncal'_{t-1}} v_{t'+1}$ for all $t\in\{1, \ldots T-1\}.$
    \end{lemma}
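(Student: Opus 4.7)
The plan is to prove the identity by induction on $t$, using the explicit form of the counterfactual transition from Eq.~\ref{eq:inv_counterfactual} together with the fact that the posterior noise values $\ub_t$ are point masses computed from the observed trajectory. The key preliminary observation is that, from the transition mechanism in Eq.~\ref{eq:transition}, the second coordinate of $g_S(\sbb, a, \ub)$ equals $u_2$ irrespective of which action $a\in\{a_\text{diff}, a_\text{null}\}$ is applied. Applying this to the counterfactual recursion $\sbb'_{t+1} = g_S(\sbb'_t, a'_t, \ub_t)$ and reading off $\ub_t = [v_{t+1}, v_{t+2}]$ for $t\leq T-3$ and $\ub_{T-2} = [v_{T-1}, 0]$, I obtain the crucial intermediate fact $s'_{t,2} = v_{t+1}$ for every $t\in\{1,\dots,T-2\}$ (and $s'_{0,2}=v_1$ by initialization).

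With this in hand, the induction proceeds easily. For the base case $t=1$: $\sbb'_0 = \sbb_0 = [0, v_1]$, so if $a'_0 = a_\text{null}$ then $s'_{1,1} = 0 + v_1 = v_1 = \sum_{t'\in\Ncal'_0} v_{t'+1}$, whereas if $a'_0 = a_\text{diff}$ then $s'_{1,1} = 0 - v_1 + v_1 = 0 = \sum_{t'\in\Ncal'_0} v_{t'+1}$ (the empty sum). For the inductive step, assuming the claim at time $t$, I split on the action:
\begin{itemize}[leftmargin=0.6cm,noitemsep,nolistsep]
    \item If $a'_t = a_\text{null}$, then $s'_{t+1,1} = s'_{t,1} + v_{t+1}$, and since $t\in\Ncal'_t$, the right-hand side becomes $\sum_{t'\in\Ncal'_{t-1}} v_{t'+1} + v_{t+1} = \sum_{t'\in\Ncal'_{t}} v_{t'+1}$.
    \item If $a'_t = a_\text{diff}$, then $s'_{t+1,1} = s'_{t,1} - s'_{t,2} + v_{t+1}$. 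Using $s'_{t,2} = v_{t+1}$ (the preliminary observation), the last two terms cancel exactly, so $s'_{t+1,1} = s'_{t,1}$; moreover $t\notin\Ncal'_t$, so $\Ncal'_t = \Ncal'_{t-1}$ and the inductive identity is preserved.
\end{itemize}

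The only nontrivial step is recognizing the designed cancellation in the $a_\text{diff}$ branch: the subtraction of $s'_{t,2}$ is calibrated precisely to wipe out the $v_{t+1}$ contribution from the posterior noise, so that actions of type $a_\text{diff}$ leave $s'_{\cdot,1}$ unchanged while actions of type $a_\text{null}$ accumulate $v_{t+1}$. Everything else is a bookkeeping induction, and the edge case $t = T-1$ does not interfere because the claim is only asserted up to $t=T-1$ and the corresponding transition uses $\ub_{T-2}=[v_{T-1},0]$, whose first coordinate is still $v_{T-1}$, so the same case analysis applies at the final step.
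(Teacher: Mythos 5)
Your proof is correct and follows essentially the same route as the paper's: an induction on $t$ whose engine is the observation that the second state coordinate always equals the second noise coordinate (so $s'_{t,2}=v_{t+1}$ regardless of the action taken), which produces the exact cancellation in the $a_\text{diff}$ branch. The only difference from the paper's argument is cosmetic re-indexing of the inductive step.
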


    Following from that, we get that $0 \leq s'_{t,1} \leq sum(\Vcal)$ for all $t\in\{1,\ldots,T-1\}$.
    Moreover, we can observe that the transition mechanism given in Eq.~\ref{eq:transition} is such that $g_{S,2}(\Sbb_t, A_t, U_{t,2}) = U_{t,2}$ for all $t\in\{0,\ldots,T-2\}$, independently of $\Sbb_T$ and $A_t$.
    Therefore, it holds that $s'_{t,2} = u_{t-1,2} \geq 1$ for $t\in\{1,\ldots, T-2\}$, and $s'_{0,2} = s_{0,2}=v_1 \geq 1$.
    As a direct consequence, it is easy to see that $R(\sbb'_t, a'_t) = 0$ for all $t\in\{0,\ldots,T-2\}$, and the counterfactual outcome is given by 
    \begin{equation}\label{eq:last_reward}
      o^+(\tau')=R(\sbb'_{T-1},a'_{T-1}),  
    \end{equation}
    In addition to that, we have that $u_{T-2,2}=0$, hence
    \begin{equation}\label{eq:last_state}
      \sbb'_{T-1} = 
      \begin{bmatrix}
        \sum_{t\in\Ncal'_{T-2}} v_{t+1} \\
        0
      \end{bmatrix}
    \end{equation}
  
    Now, we will show that, if we can find the action sequence $\{a^*_t\}_{t=0}^{T-1}$ that gives the optimal counterfactual outcome $o^+(\tau^*)$ for the aforementioned instance in polynomial time, then we can make a decision about the corresponding instance of the partition problem, also in polynomial time.
    To this end, let $\{\sbb^*_t\}_{t=0}^{T-1}$ be the sequence of states in the optimal counterfactual realization and, let $\Dcal^*_{T-2} = \{t\in\{0,\ldots,T-2\} : a^*_{t}=a_\text{diff}\}$, $\Ncal^*_{T-2} = \{t'\in\{0,\ldots,T-2\} : a^*_{t'}=a_\text{null}\}$.

    From Eq.~\ref{eq:last_reward}, we get that the optimal counterfactual outcome is $o^+(\tau^*) = R(\sbb^*_{T-1}, a^*_{T-1})$, and it is easy to see that the reward function given in Eq.~\ref{eq:reward} is always less or equal than zero.
    If $o(\tau^*)=0$, it has to hold that
    \begin{multline*}
      \max\left(0, s^*_{T-1,1} - \frac{sum(\Vcal)}{2} - s^*_{T-1,2}\frac{sum(\Vcal)}{2}\right) = \\
      \max\left(0, \frac{sum(\Vcal)}{2} - s^*_{T-1,1} - s^*_{T-1,2}\frac{sum(\Vcal)}{2}\right)=0 \stackrel{(*)}{\Rightarrow} \\
      \left(\sum_{t\in\Ncal^*_{T-2}} v_{t+1}\right) - \frac{sum(\Vcal)}{2} \leq 0 \quad \text{and} \quad \frac{sum(\Vcal)}{2} - \left(\sum_{t\in\Ncal^*_{T-2}} v_{t+1}\right) \leq 0 \Rightarrow \\
      \sum_{t\in\Ncal^*_{T-2}} v_{t+1} = \frac{sum(\Vcal)}{2},
    \end{multline*}
    where $(*)$ follows from Eq.~\ref{eq:last_state}.
    As a consequence, the subsets $\Vcal_1=\{v_i : i-1 \in \Ncal^*_{T-2}\}$ and $\Vcal_2=\{v_i : i-1 \in \Dcal^*_{T-2}\}$ partition $\Vcal$ and their sums are equal.
  
    On the other hand, if $o^+(\tau^*)<0$, as we will show, there is no partition of $\Vcal$ into two sets with equal sums.
    For the sake of contradiction, assume there are two sets $\Vcal_1, \Vcal_2$ that partition $\Vcal$, with $sum(\Vcal_1)=sum(\Vcal_2)=sum(\Vcal)/2$, and let $\Ncal'_{T-2} = \{t\in\{ 0, \ldots, T-2\} : v_{t+1}\in \Vcal_1\}$ and $\Dcal'_{T-2} = \{t\in\{ 0, \ldots, T-2\} : v_{t+1}\in \Vcal_2\}$.
    Then, consider the counterfactual episode $\tau'$ with an action sequence $\{a'_t\}_{t=0}^{T-1}$ such that its elements take values $a_\text{null}$ and $a_\text{diff}$ based on the sets $\Ncal'_{T-2}, \Dcal'_{T-2}$ respectively, with $a'_{T-1}$ taking an arbitrary value.
    It is easy to see that
    \begin{align*}
      o^+(\tau') &= R(\sbb'_{T-1}, a'_{T-1}) \\
      &= R\left(\begin{bmatrix}
        \sum_{t\in\Ncal'_{T-2}} v_{t+1} \\
        0
      \end{bmatrix}, a'_{T-1}\right) \\
      &= -\max\left(0, \sum_{t\in\Ncal'_{T-2}} v_{t+1} - \frac{sum(\Vcal)}{2}\right) - \max\left(0, \frac{sum(\Vcal)}{2} - \sum_{t\in\Ncal'_{T-2}} v_{t+1}\right) \\
      &= -\max\left(0, \frac{sum(\Vcal)}{2} - \frac{sum(\Vcal)}{2}\right) - \max\left(0, \frac{sum(\Vcal)}{2} - \frac{sum(\Vcal)}{2}\right) \\
      &= 0 > o^+(\tau^*),
    \end{align*}
    which is a contradiction.
    This step concludes the reduction and, therefore, the problem given in Eq.~\ref{eq:statement} cannot be solved in polynomial time, unless $P=NP$.

\subsubsection{Proof of Lemma~\ref{lem:np-helper}}\label{app:nplem-1}
 
It is easy to see that, for all $\ub\in\Ucal$ and for all $\sbb, \sbb'\in\Scal$, the function $g_S(\Sbb_t, a_\text{null}, \ub)$ is such that $\norm{g_S(\sbb, a_\text{null}, \ub) - g_S(\sbb', a_\text{null}, \ub)} \leq \norm{\sbb - \sbb'}$, and therefore $K_{a_\text{null},\ub}=1$ satisfies Definition~\ref{def:lipschitz_scm}.
For the case of $A_t=a_\text{diff}$, we have that
\begin{multline*}\label{eq:ineq-1}
  \norm{g_S(\sbb, a_\text{diff}, \ub) - g_S(\sbb', a_\text{diff}, \ub)}
  = \norm{
    \begin{bmatrix}
      s_1 - s_2 \\
      0
    \end{bmatrix}
    -
    \begin{bmatrix}
      s'_1 - s'_2 \\
      0
    \end{bmatrix}
  } 
  = \norm{
    \begin{bmatrix}
      (s_1 - s'_1) + (s'_2 - s_2) \\
      0
    \end{bmatrix}
  } \\
  = |(s_1 - s'_1) + (s'_2 - s_2)| 
  \leq |s_1 - s'_1| + |s_2 - s'_2|,
\end{multline*}
and therefore $\norm{g_S(\sbb, a_\text{diff}, \ub) - g_S(\sbb', a_\text{diff}, \ub)}^2 \leq (s_1 - s'_1)^2 + (s_2 - s'_2)^2 + 2|s_1 - s'_1||s_2 - s'_2|$.
We also have that
\begin{equation*}\label{eq:ineq-2}
  \sqrt{2}\norm{\sbb-\sbb'} = \sqrt{2}\sqrt{(s_1 - s'_1)^2 + (s_2 - s'_2)^2} \Rightarrow
  2\norm{\sbb-\sbb'}^2 = 2(s_1 - s'_1)^2 + 2(s_2 - s'_2)^2.
\end{equation*}
By combining these, we get 
\begin{multline*}
  2\norm{\sbb-\sbb'}^2 - \norm{g_S(\sbb, a_\text{diff}, \ub) - g_S(\sbb', a_\text{diff}, \ub)}^2 \geq (s_1 - s'_1)^2 + (s_2 - s'_2)^2 - 2|s_1 - s'_1||s_2 - s'_2| \Rightarrow \\
  2\norm{\sbb-\sbb'}^2 - \norm{g_S(\sbb, a_\text{diff}, \ub) - g_S(\sbb', a_\text{diff}, \ub)}^2 \geq \left(|s_1 - s'_1| - |s_2 - s'_2|\right)^2 \geq 0.
\end{multline*}
Hence, we can easily see that $K_{a_\text{diff},\ub}=\sqrt{2}$ satisfies Definition~\ref{def:lipschitz_scm}.

Next, we need to show that, for all $a\in\Acal$ there exists a $C_a\in\RR_+$ such that, for all $\sbb,\sbb'\in\Scal$, it holds $|R(\sbb,a) - R(\sbb',a)|\leq C_a \norm{\sbb - \sbb'}$.
Note that, to show that a function of the form $\max(0, f(\sbb))$ with $f:\RR^2 \rightarrow \RR$ is Lipschitz continuous, it suffices to show that $f(\sbb)$ is Lipschitz continuous, since the function $\max(0,x)$ with $x\in\RR$ has a Lipschitz constant equal to $1$.

We start by showing that the function $f(\sbb)=s_1 - \alpha - s_2\cdot\alpha$ is Lipschitz continuous, where $\alpha=sum(\Vcal)/2$ is a positive constant.
For an arbitrary pair $\sbb, \sbb'\in\Scal$, we have that
\begin{align*}
  &|f(\sbb) - f(\sbb')| = |s_1 - s'_1 -\alpha(s_2-s'_2)| \leq |s_1 - s'_1| + \alpha|s_2 - s'_2| \Rightarrow \\
  &|f(\sbb) - f(\sbb')|^2 \leq (s_1 - s'_1)^2 + (s_2 - s'_2)^2 + 2\alpha |s_1-s'_1||s_2 - s'_2|.
\end{align*}
We also have that
\begin{align*}
  & \sqrt{1+\alpha}\norm{\sbb - \sbb'} = \sqrt{1+\alpha}\sqrt{(s_1 - s'_1)^2 + (s_2 - s'_2)^2} \Rightarrow \\
  & (1+\alpha)\norm{\sbb - \sbb'}^2 = (1+\alpha)(s_1 - s'_1)^2 + (1+\alpha)(s_2 - s'_2)^2
\end{align*}
By combining these, we get
\begin{align*}
  & (1+\alpha)\norm{\sbb-\sbb'}^2 - |f(\sbb) - f(\sbb')|^2 \geq \alpha(s_1 - s'_1)^2 + \alpha(s_2 - s'_2)^2 - 2\alpha|s_1 - s'_1||s_2 - s'_2| \Rightarrow \\
  & (1+\alpha)\norm{\sbb-\sbb'}^2 - |f(\sbb) - f(\sbb')|^2 \geq \alpha\left(|s_1 - s'_1| - |s_2 - s'_2|\right)^2 \geq 0.
\end{align*}
Hence, we arrive to $|f(\sbb) - f(\sbb')| \leq \sqrt{1+\alpha}\norm{\sbb - \sbb'}$, and the function $f$ is Lipschitz continuous.
It is easy to see that the function $\phi(\sbb) = \alpha - s_1 - s_2\cdot\alpha$ is also Lipschitz continuous with the proof being almost identical.
As a direct consequence, the reward function given in Equation~\ref{eq:reward} satisfies Definition~\ref{def:lipschitz_scm} with $C_{a_\text{null}}=C_{a_\text{diff}}=2\sqrt{1+\frac{sum(\Vcal)}{2}}$.
This concludes the proof of the lemma.

\subsubsection{Proof of Lemma~\ref{lem:np-helper-2}}\label{app:nplem-2}

    We will prove the lemma by induction.
    For the base case of $t=1$, we distinguish between the cases $a'_0=a_\text{diff}$ and $a'_0=a_\text{null}$.
    In the first case, we have $s'_{1,1} = u_{0,1} + s_{0,1} - s_{0,2} = v_1 + 0 - v_1 = 0$ and $\Ncal'_{0}=\emptyset$ and, therefore, the statement holds.
    In the second case, we have $s'_{1,1} = u_{0,1} + s_{0,1} = v_1 + 0 = v_1$, $\Ncal'_{0}=\{0\}$ and $\sum_{t'\in\Ncal'_{0}} v_{t'+1} = v_1$. Therefore, the statement also holds.
  
    For the inductive step ($t>1$), we assume that $s'_{t-1, 1}=\sum_{t'\in\Ncal'_{t-2}} v_{t'+1}$ and we will show that $s'_{t, 1}=\sum_{t'\in\Ncal'_{t-1}} v_{t'+1}$.
    Again, we distinguish between the cases $a'_{t-1}=a_\text{diff}$ and $a'_{t-1}=a_\text{null}$.
    However, note that, in both cases, $s'_{t-1,2} = u_{t-2,2} + 0 = v_t$.
    Therefore, in the case of $a'_{t-1}=a_\text{diff}$, we get
    \begin{equation*}
      s'_{t, 1} = u_{t-1,1} + s'_{t-1,1} - s'_{t-1,2} 
      = v_t + \sum_{t'\in\Ncal'_{t-2}} v_{t'+1} - v_t 
      = \sum_{t'\in\Ncal'_{t-2}} v_{t'+1} 
      = \sum_{t'\in\Ncal'_{t-1}} v_{t'+1},
    \end{equation*}
    where the last equation holds because $a'_{t-1}=a_\text{diff}$ and, therefore, $\Ncal'_{t-1}=\Ncal'_{t-2}$.
    In the case of $a'_{t-1}=a_\text{null}$, we get
    \begin{equation*}
      s'_{t, 1} = u_{t-1,1} + s'_{t-1,1} 
      = v_t + \sum_{t'\in\Ncal'_{t-2}} v_{t'+1} 
      = \sum_{t'\in\Ncal'_{t-1}} v_{t'+1},
    \end{equation*}
    where the last equation holds because $a'_{t-1}=a_\text{null}$ and, therefore, $\Ncal'_{t-1}=\Ncal'_{t-2} \cup \{t-1\}$.

\subsection{Proof of Lemma~\ref{lem:lipschitz}}\label{app:lipschitz}

We will prove the proposition by induction, starting from the base case, where $t=T-1$.
First, Let $t=T-1$ and $l=k$.
It is easy to see that, if the process is at a state $\sbb\in\Scal$ in the last time step with no action changes left, the best reward that can be achieved is $R(\sbb, a_{T-1})$, as already discussed after Eq.~\ref{eq:substructure}.
Therefore, it holds that $|V_\tau (\sbb, k, T-1) - V_\tau (\sbb', k, T-1)| = |R(\sbb, a_{T-1}) - R(\sbb', a_{T-1})| \leq C_{a_{T-1}} \norm{\sbb - \sbb'} \leq C \norm{\sbb - \sbb'}$, where the last step holds because $C=\max_{a\in\Acal} C_a$.
Now, consider the case of $t=T-1$ with $l$ taking an arbitrary value in $\{0,\ldots,k-1\}$.
Let $\sbb, \sbb'$ be two states in $\Scal$ and $a^*$ be the action that gives the maximum immediate reward at state $\sbb$, that is, $a^* = \argmax_{a\in\Acal} \{R(\sbb, a)\}$.
Then, we get 
\begin{multline*}
  |V_\tau (\sbb, l, T-1) - V_\tau (\sbb', l, T-1)|
  = |\max_{a\in\Acal} \{R(\sbb, a)\} - \max_{a\in\Acal} \{R(\sbb',a)\}| \\
  \stackrel{(*)}{\leq} |R(\sbb, a^*) - R(\sbb', a^*)| 
  \leq C_{a^*}\norm{\sbb - \sbb'} 
  \leq C \norm{\sbb - \sbb'},
\end{multline*}
where $(*)$ follows from the fact that $R(\sbb', a^*) \leq \max_{a\in\Acal} \{R(\sbb',a)\}$.
Therefore, for any $l\in\{0,\ldots,k\}$ and $\sbb, \sbb'\in\Scal$, it holds that $|V_\tau (\sbb, l, T-1) - V_\tau (\sbb', l, T-1)| \leq L_{T-1} \norm{\sbb - \sbb'}$, where $L_{T-1}=C$.

Now, we will proceed to the induction step.
Let $t<T-1$, $l<k$ and, as an inductive hypothesis, assume that $L_{t+1}\in\RR_+$ as defined in Lemma~\ref{lem:lipschitz} is such that, for all $l\in\{0,\ldots,k\}$ and $\sbb, \sbb'\in\Scal$, it holds that $|V_\tau (\sbb, l, t+1) - V_\tau (\sbb', l, t+1)| \leq L_{t+1} \norm{\sbb - \sbb'}$.
%
%
Additionally, let $(\sbb_a, l_a), (\sbb'_a, l_a)$ denote the enhanced states that follow from $(\sbb, l), (\sbb', l)$ after taking an action $a$, \ie, $(\sbb_a, l_a) = F^+_{\tau,t} \left(\left(\sbb, l\right), a\right)$ and $(\sbb'_a, l_a) = F^+_{\tau,t} \left(\left(\sbb', l\right), a\right)$.
Lastly, let $a^*$ be the action that maximizes the future total reward starting from state $\sbb$, \ie, $a^* = \argmax_{a\in\Acal} \{R(\sbb,a) + V_\tau \left(\sbb_a, l_a, t+1\right) \}$.
Then, we have that
\begin{align*}
  &|V_\tau (\sbb, l, t) - V_\tau (\sbb', l, t)| \\
  &\qquad\qquad= | \max_{a\in\Acal} \{R(\sbb, a) + V_\tau \left(\sbb_a, l_a, t+1\right) \} - \max_{a\in\Acal} \{R(\sbb', a) + V_\tau \left(\sbb'_{a}, l_{a}, t+1\right) \} | \\
  &\qquad\qquad\stackrel{(*)}{\leq} | R(\sbb, a^*) + V_\tau \left(\sbb_{a^*}, l_{a^*}, t+1\right) - R(\sbb', a^*) - V_\tau \left(\sbb'_{a^*}, l_{a^*}, t+1\right) | \\
  &\qquad\qquad\leq | R(\sbb, a^*) - R(\sbb', a^*) | + | V_\tau \left(\sbb_{a^*}, l_{a^*}, t+1\right) -  V_\tau \left(\sbb'_{a^*}, l_{a^*}, t+1\right) | \\
  &\qquad\qquad\stackrel{(**)}{\leq} C_{a^*} \norm{ \sbb - \sbb'} + L_{t+1} \norm{ \sbb_{a^*} - \sbb'_{a^*}} \\
  &\qquad\qquad\leq C_{a^*} \norm{\sbb - \sbb'} + L_{t+1} K_{a^*, \ub_t}\norm{\sbb - \sbb'}\\
  &\qquad\qquad\stackrel{(***)}{\leq} C \norm{ \sbb - \sbb' } + L_{t+1} K_{\ub_t} \norm{ \sbb - \sbb' }\\
  &\qquad\qquad= (C + L_{t+1} K_{\ub_t}) \norm{ \sbb - \sbb' } = L_t \norm{ \sbb - \sbb' }.
\end{align*}
In the above, $(*)$ holds due to $R(\sbb', a^*) + V_\tau \left(\sbb'_{a^*}, l_{a^*}, t+1\right) \leq \max_{a\in\Acal} \{R(\sbb', a) + V_\tau \left(\sbb'_{a}, l_{a}, t+1\right) \}$, $(**)$ follows from the inductive hypothesis, and $(***)$ holds because $C = \max_{a\in\Acal} C_{a}$ and $K_{\ub_t} = \max_{a\in\Acal} K_{a, \ub_t}$.
It is easy to see that, similar arguments hold for the simple case of $l=k$, therefore, we omit the details.
This concludes the inductive step and the proof of Lemma~\ref{lem:lipschitz}.
%

%

\subsection{Proof of Proposition~\ref{prop:anchor}}\label{app:anchor}

We will prove the proposition by induction, starting from the base case, where $t=T-1$.
If $t=T-1$, the algorithm initializes $\hat{V}_\tau (\sbb, l, T-1)$ to $\max_{a\in\Acal} R(\sbb, a)$ for all $\sbb\in\Scal_\dagger$, $l\in\{0,\ldots,k-1\}$ and $\hat{V}_\tau (\sbb, k, T-1)$ to $R(\sbb, a_{T-1})$.
It is easy to see that those values are optimal, as already discussed after Eq.~\ref{eq:substructure}.
Therefore, the base case $\hat{V}_\tau (\sbb, l, T-1) \geq V_\tau (\sbb, l, T-1)$ follows trivially.

Now, we will proceed to the induction step.
Let $t<T-1$ and, as an inductive hypothesis, assume that $\hat{V}_\tau (\sbb, l, t+1) \geq V_\tau (\sbb, l, t+1)$ for all $\sbb\in\Scal_\dagger$, $l\in\{0,\ldots, k\}$.
Our goal is to show that $\hat{V}_\tau (\sbb, l, t) \geq V_\tau (\sbb, l, t)$ for all $\sbb\in\Scal_\dagger$, $l\in\{0,\ldots, k\}$.
First, let $l<k$.
For a given point $\sbb \in \Scal_\dagger$, Algorithm~\ref{alg:dp} finds the next state $\sbb_a$ that would have occurred by taking each action $a$, \ie, $(\sbb_a,l_a) = F^+_{\tau,t} \left(\left(\sbb, l\right), a\right)$, and it computes the associated value $V_a = \min_{\sbb_\dagger \in \Scal_\dagger} \{ \hat{V}_{\tau}(\sbb_\dagger, l_a, t+1) + L_{t+1} \norm{\sbb_\dagger - \sbb_a} \}$.
Then, it simply sets $\hat{V}_\tau (\sbb, l , t)$ equal to $\max_{a\in\Acal} \left\{R(\sbb, a) + V_a\right\}$.
We have that 
\begin{align*}
  V_a
  &= \min_{\sbb_\dagger \in \Scal_\dagger} \{ \hat{V}_{\tau}(\sbb_\dagger, l_a, t+1) + L_{t+1} \norm{\sbb_\dagger - \sbb_a} \} \\
  &\stackrel{(*)}{\geq} \min_{\sbb_\dagger \in\Scal_\dagger} \{V_{\tau}(\sbb_\dagger, l_a, t+1) + L_{t+1} \norm{\sbb_\dagger - \sbb_a}\} \\
  &\stackrel{(**)}{\geq} \min_{\sbb_\dagger \in\Scal_\dagger} \{V_{\tau}(\sbb_a, l_a, t+1)\} \\
  &= V_{\tau}(\sbb_a, l_a, t+1),
\end{align*}
where $(*)$ follows from the inductive hypothesis, and $(**)$ is a consequence of Lemma~\ref{lem:lipschitz}.
Then, we get
\begin{equation*}
  \hat{V}_{\tau}(\sbb, l, t)
  = \max_{a\in\Acal} \left\{R(\sbb, a) + V_a\right\}
  \geq \max_{a\in\Acal} \{R(\sbb, a) + V_{\tau}(\sbb_a, l_a, t+1)\}
  = V_{\tau}(\sbb, l, t).
\end{equation*}

Additionally, when $l=k$, we have $\hat{V}_{\tau}(\sbb, k, t) = R(\sbb, a_t) + \min_{\sbb_\dagger \in \Scal_\dagger} \{\hat{V}_{\tau}(\sbb_\dagger, k, t+1) + L_{t+1} \norm{\sbb_\dagger - \sbb_{a_t}} \}$ and $V_{\tau}(\sbb, k, t) = R(\sbb, a_t) + V_{\tau}(\sbb_{a_t}, k, t+1)$.
Therefore, the proof for $\hat{V}_{\tau}(\sbb, k, t) \geq V_{\tau}(\sbb, k, t)$ is almost identical.

\subsection{Proof of Theorem~\ref{thm:consistent}}\label{app:consistent}

We start from the case where $t=T-1$.
Let $v=(\sbb, l, T-1)$ and, consider an edge associated with action $a^*$ connecting $v$ to the goal node $v_T=(\sbb_\emptyset, k, T)$ that carries a reward $R(\sbb, a^*)$.
Then, we have
\begin{equation*}
\hat{V}_\tau (\sbb, l, T-1) = \max_{a\in\Acal'} R(\sbb, a) \geq R(\sbb, a^*) + 0 = R(\sbb, a^*) + \hat{V}_\tau (\sbb_\emptyset, k, T),
\end{equation*}
and the base case holds.

For the more general case, where $t<T-1$, we first establish the following intermediate result, whose proof is given in Appendix~\ref{app:heur-lem}.

\begin{lemma}\label{lem:heuristic-lipschitz}
  For every $\sbb,\sbb'\in\Scal$, $l\in\{0,\ldots,k\}$, $t\in\{0,\ldots,T-1\}$, it holds that $|\hat{V}_\tau(\sbb, l, t) - \hat{V}_\tau(\sbb', l, t)| \leq L_t \norm{\sbb - \sbb'}$, where $L_t$ is as defined in Lemma~\ref{lem:lipschitz}.
\end{lemma}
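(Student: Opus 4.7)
The plan is to establish the Lipschitz continuity of $\hat{V}_\tau$ by a direct argument at each fixed time step $t$ separately. The structure closely mirrors that of Lemma~\ref{lem:lipschitz} for $V_\tau$, but it exploits a convenient feature of the heuristic's definition in Eq.~\ref{eq:heuristic}: the inner minimum over the finite anchor set $\Scal_\dagger$ is automatically Lipschitz in $\sbb_a$ regardless of the precomputed values $\hat{V}_\tau(\sbb_\dagger, l_a, t+1)$, since these play the role of constants being shifted inside a minimum. As a result, one does not actually need an inductive hypothesis about $\hat{V}_\tau$ at $t+1$ being Lipschitz---the Lipschitz constant of the heuristic propagates purely through the penalty terms $L_{t+1}\norm{\sbb_\dagger - \sbb_a}$.

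First, I would handle the edge case $t=T-1$: here $\hat{V}_\tau(\sbb, l, T-1) = \max_{a\in\Acal'} R(\sbb, a)$, a maximum over a finite action set of $C_a$-Lipschitz reward functions. A standard argument---picking the maximizer at $\sbb$ and using it as a feasible point to lower-bound the value at $\sbb'$---shows that this is $C$-Lipschitz, which matches $L_{T-1}=C$.

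Next, for $t<T-1$, the key intermediate claim is that the function $h_a(\sbb_a) := \min_{\sbb_\dagger \in \Scal_\dagger} \{\hat{V}_\tau(\sbb_\dagger, l_a, t+1) + L_{t+1}\norm{\sbb_\dagger - \sbb_a}\}$ is $L_{t+1}$-Lipschitz in its argument $\sbb_a$. This follows by the triangle inequality: for every $\sbb_\dagger\in\Scal_\dagger$, $\norm{\sbb_\dagger - \sbb_a} \leq \norm{\sbb_\dagger - \sbb'_a} + \norm{\sbb_a - \sbb'_a}$, so adding the constant $\hat{V}_\tau(\sbb_\dagger, l_a, t+1)$ on both sides, multiplying the distance terms by $L_{t+1}$, and taking the minimum over $\sbb_\dagger$ yields $h_a(\sbb_a) \leq h_a(\sbb'_a) + L_{t+1}\norm{\sbb_a - \sbb'_a}$. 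Swapping the roles of $\sbb_a, \sbb'_a$ gives the two-sided bound.

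Finally, I would chain through the environment's Lipschitz continuity. For any fixed $a$, letting $(\sbb_a, l_a)=F^+_{\tau,t}((\sbb,l),a)$ and $(\sbb'_a, l_a)=F^+_{\tau,t}((\sbb',l),a)$, Definition~\ref{def:lipschitz_scm} gives $\norm{\sbb_a - \sbb'_a} \leq K_{a,\ub_t}\norm{\sbb - \sbb'}$ with $\ub_t = g^{-1}_S(\sbb_t, a_t, \sbb_{t+1})$; composition with $h_a$ therefore yields $L_{t+1}K_{a,\ub_t}$-Lipschitz continuity of $\sbb\mapsto h_a(\sbb_a)$, and adding the $C_a$-Lipschitz reward $R(\sbb, a)$ gives an overall per-action Lipschitz constant of $C_a + L_{t+1}K_{a,\ub_t}$. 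Applying the standard max-of-Lipschitz-functions argument over the finite set $\Acal'$ preserves this with constant $\max_{a\in\Acal}(C_a + L_{t+1}K_{a,\ub_t}) \leq C + L_{t+1}K_{\ub_t} = L_t$, as required. The only subtle point---which I expect to be the main (though minor) obstacle---is phrasing the inner-min Lipschitz claim cleanly; everything else is a routine composition of Lipschitz bounds fully analogous to the chain worked out for $V_\tau$ in Lemma~\ref{lem:lipschitz}.
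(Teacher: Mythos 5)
Your proposal is correct and follows essentially the same route as the paper's proof: a separate treatment of $t=T-1$, the triangle inequality applied inside the minimum over $\Scal_\dagger$ (your claim that $h_a$ is $L_{t+1}$-Lipschitz is exactly the paper's step of selecting the minimizing anchor $\tilde{\sbb}$ and bounding $\left|\norm{\tilde{\sbb}-\sbb_{a^*}} - \norm{\tilde{\sbb}-\sbb'_{a^*}}\right| \leq \norm{\sbb_{a^*}-\sbb'_{a^*}}$), composition with $\norm{\sbb_a - \sbb'_a}\leq K_{a,\ub_t}\norm{\sbb-\sbb'}$, and the standard max-over-actions argument yielding $C + L_{t+1}K_{\ub_t}=L_t$. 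Your observation that no inductive hypothesis on $\hat{V}_\tau(\cdot,\cdot,t+1)$ is needed—because the anchor values enter only as constants—is also consistent with the paper, whose proof is likewise non-inductive for $t<T-1$.
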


That said, consider an edge associated with an action $a^*$ connecting node $v=(\sbb, l, t)$ to node $v_{a^*} = (\sbb_{a^*}, l_{a^*}, t+1)$.
Then, we have
\begin{align*}
  \hat{V}_\tau (\sbb, l, t) 
  &= \max_{a\in\Acal'}  \left\{R(\sbb, a) + \min_{\sbb_\dagger \in \Scal_\dagger} \left\{\hat{V}_{\tau}(\sbb_\dagger, l_a, t+1) + L_{t+1} \norm{\sbb_\dagger - \sbb_a} \right\}\right\} \\
  &\geq R(\sbb, a^*) + \min_{\sbb_\dagger \in \Scal_\dagger} \left\{\hat{V}_{\tau}(\sbb_\dagger, l_{a^*}, t+1) + L_{t+1} \norm{\sbb_\dagger - \sbb_{a^*}} \right\} \\
  &\geq R(\sbb, a^*) + \min_{\sbb_\dagger \in \Scal_\dagger} \left\{\hat{V}_{\tau}(\sbb_{a^*}, l_{a^*}, t+1) \right\} \\
  &= R(\sbb, a^*) + \hat{V}_{\tau}(\sbb_{a^*}, l_{a^*}, t+1).
\end{align*}
That concludes the proof and, therefore, the heuristic function $\hat{V}_\tau$ is consistent.

\subsubsection{Proof of Lemma~\ref{lem:heuristic-lipschitz}}\label{app:heur-lem}
\vspace{-1mm} 

Without loss of generality, we will assume that $l<k$, since the proof for the case of $l=k$ is similar and more straightforward.
We start from the case where $t=T-1$ and, for two states $\sbb,\sbb'\in\Scal$ we have
\begin{multline*}
  |\hat{V}_\tau(\sbb, l, T-1) - \hat{V}_\tau(\sbb', l, T-1)|
  = \left| \max_{a\in\Acal}  R(\sbb, a) - \max_{a\in\Acal}  R(\sbb', a) \right| \\
  = \left| V_\tau(\sbb, l, T-1) - V_\tau(\sbb', l, T-1)\right| 
  \leq C \norm{ \sbb - \sbb' } = L_{T-1} \norm{ \sbb - \sbb' },
\end{multline*}
where the last inequality follows from Lemma~\ref{lem:lipschitz}.

Now, consider the case $t<T-1$, and let $(\sbb_a, l_a)$ denote the enhanced state that follows from $(\sbb, l)$ after taking an action $a$ at time $t$, \ie, $(\sbb_a, l_a) = F^+_{\tau,t} \left(\left(\sbb, l\right), a\right)$.
Then, we have
\begin{multline}\label{eq:helper-1}
  |\hat{V}_\tau(\sbb, l, t) - \hat{V}_\tau(\sbb', l, t)| \\
  = \Bigg| \max_{a\in\Acal}  \left\{R(\sbb, a) + \min_{\sbb_\dagger \in \Scal_\dagger} \left\{\hat{V}_{\tau}(\sbb_\dagger, l_a, t+1) + L_{t+1} \norm{\sbb_\dagger - \sbb_a} \right\}\right\}  \\
  \qquad - \max_{a\in\Acal}  \left\{R(\sbb', a) + \min_{\sbb_\dagger \in \Scal_\dagger} \left\{\hat{V}_{\tau}(\sbb_\dagger, l_a, t+1) + L_{t+1} \norm{\sbb_\dagger - \sbb'_a} \right\}\right\} \Bigg|. 
\end{multline}
Let $a^*$ be the action $a\in\Acal$ that maximizes the first part of the above subtraction, \ie,
\begin{equation*} 
a^* = \argmax_{a\in\Acal}  \left\{R(\sbb, a) + \min_{\sbb_\dagger \in \Scal_\dagger} \left\{\hat{V}_{\tau}(\sbb_\dagger, l_a, t+1) + L_{t+1} \norm{\sbb_\dagger - \sbb_a} \right\}\right\}
\end{equation*}
Then, Eq.~\ref{eq:helper-1} implies that
  \begin{align}\label{eq:helper-2}
    \begin{split}
      |\hat{V}_\tau(\sbb, l, t) - \hat{V}_\tau(\sbb', l, t)|
      &\leq \bigg| R(\sbb, a^*) + \min_{\sbb_\dagger \in \Scal_\dagger} \left\{\hat{V}_{\tau}(\sbb_\dagger, l_{a^*}, t+1) + L_{t+1} \norm{\sbb_\dagger - \sbb_{a^*}} \right\}  \\
      &\qquad - R(\sbb', a^*) - \min_{\sbb_\dagger \in \Scal_\dagger} \left\{\hat{V}_{\tau}(\sbb_\dagger, l_{a^*}, t+1) + L_{t+1} \norm{\sbb_\dagger - \sbb'_{a^*}} \right\} \bigg| \\
      &\leq \left| R(\sbb, a^*) - R(\sbb', a^*) \right| \\
      &\qquad + \bigg| \min_{\sbb_\dagger \in \Scal_\dagger} \left\{\hat{V}_{\tau}(\sbb_\dagger, l_{a^*}, t+1) + L_{t+1} \norm{\sbb_\dagger - \sbb_{a^*}} \right\}  \\
      &\qquad\qquad - \min_{\sbb_\dagger \in \Scal_\dagger} \left\{\hat{V}_{\tau}(\sbb_\dagger, l_{a^*}, t+1) + L_{t+1} \norm{\sbb_\dagger - \sbb'_{a^*}} \right\} \bigg|
  \end{split}
\end{align}
Now, let $\tilde{\sbb}$ be the $\sbb_\dagger \in \Scal_\dagger$ that minimizes the second part of the above subtraction, \ie, 
\begin{equation*}
  \tilde{\sbb} = \argmin_{\sbb_\dagger\in\Scal_\dagger}   \left\{\hat{V}_{\tau}(\sbb_\dagger, l_{a^*}, t+1) + L_{t+1} \norm{\sbb_\dagger - \sbb'_{a^*}} \right\}.  
\end{equation*}
As a consequence and in combination with Eq.~\ref{eq:helper-2}, we get
\begin{multline*}
  |\hat{V}_\tau(\sbb, l, t) - \hat{V}_\tau(\sbb', l, t)|
  \leq \left| R(\sbb, a^*) - R(\sbb', a^*) \right| \\
  \qquad\qquad\qquad\qquad\qquad + \bigg| \hat{V}_{\tau}(\tilde{\sbb}, l_{a^*}, t+1) + L_{t+1} \norm{\tilde{\sbb} - \sbb_{a^*}} 
  - \hat{V}_{\tau}(\tilde{\sbb}, l_{a^*}, t+1) + L_{t+1} \norm{\tilde{\sbb} - \sbb'_{a^*}}  \bigg| \\
  = \left| R(\sbb, a^*) - R(\sbb', a^*) \right| + L_{t+1} \left| \norm{\tilde{\sbb} - \sbb_{a^*}} - \norm{\tilde{\sbb} - \sbb'_{a^*}} \right| \\
  \stackrel{(*)}{\leq} C_{a^*}\norm{ \sbb - \sbb' } + L_{t+1} \norm{\sbb_{a^*} - \sbb'_{a^*}} \\ 
  \stackrel{(**)}{\leq} C \norm{ \sbb - \sbb' } + L_{t+1} K_{\ub_t} \norm{\sbb - \sbb'}
  = L_t \norm{ \sbb - \sbb' },
\end{multline*}
where in $(*)$ we use the triangle inequality and the fact that the SCM $\Ccal$ is Lipschitz-continuous, and $(**)$ follows from Lemma~\ref{lem:lipschitz}.

\newpage

\section{A* algorithm}\label{app:astar}

Algorithm~\ref{alg:astar} summarizes the step-by-step process followed by the $A^*$ algorithm.
Therein, we represent each node $v$ by an object with $4$ attributes: (i) the ``tuple'' $(\sbb, l, t)$ of the node, (ii) the total reward ``rwd'' of the path that has led the search from the root node $v_0$ to the node $v$, (iii) the parent node ``par'' from which the search arrived to $v$, and (iv) the action ``act'' associated with the edge connecting the node $v$ with its parent.
In addition to the queue of nodes to visit, the algorithm maintains a set of explored nodes, and it adds a new node to the queue only if it has not been previously explored.
The algorithm terminates when the goal node is chosen to be visited, \ie, the ``tuple'' attribute of $v$ has the format $(*,*,T)$, where $*$ denotes arbitrary values.
Once the goal node $v_T$ has been visited, the algorithm reconstructs the action sequence that led from the root node $v_0$ to the goal node and returns it as output.

\begin{algorithm}[t]
	\textbf{Input}: States $\Scal$, actions $\Acal$, observed action sequence $\{a_t\}_{t=0}^{t=T-1}$, horizon $T$, transition function $F^+_{\tau,t}$, reward function $R$, constraint $k$, initial state $\sbb_0$, heuristic function $\hat{V}_\tau$. \\
	\textbf{Initialize}:
    \textsc{node} $v_0 \leftarrow \{\text{``tuple"}: (\sbb_0, 0, 0), \text{``rwd''}: 0, \text{``par''}: Null, \text{``act''}: Null\}$, \\
    \qquad\qquad\quad \textsc{stack} $action\_sequence \leftarrow [\,]$  \\
    \qquad\qquad\quad \textsc{queue} $Q \leftarrow \{root\}$ \\
    \qquad\qquad\quad \textsc{set} $explored \leftarrow \emptyset$ \\
    \While{True}{
        $v \leftarrow \argmax_{v'\in Q}\{v'.rwd + \hat{V}_\tau(v'.tuple)\}$; $Q \leftarrow Q\setminus v$  \Comment*[r]{Next node to visit}
        \If{$v.tuple=(*,*,T)$}{ 
            \While{$v.par \neq Null$}{
                $action\_sequence.push(v.act)$   \Comment*[r]{Retrieve final action sequence}
                $v = v.par$ \\
            }
            \textbf{return} $action\_sequence$
        }
        $explored \leftarrow explored \cup \{v\}$ \Comment*[r]{Set node $v$ as explored}
        \uIf{$l=k$}{
            $availabe\_actions \leftarrow \{a_t\}$ \\
        }
        \Else{
            $availabe\_actions \leftarrow \Acal$
        }
        \For{$a \in available\_actions$}{
            $\left(\sbb, l, t\right) \leftarrow v.tuple$ \\
            $\left(\sbb_a,l_a\right) \leftarrow F^+_{\tau,t} \left(\left(\sbb, l\right), a\right)$ 
            \Comment*[r]{Identify $v$'s children nodes}
            $v_a \leftarrow \{\text{``tuple"}: \left(\sbb_a, l_a , t+1\right), \text{``rwd''}: v.rwd + R(\sbb, a), \text{``par''}: v, \text{``act''}: a\}$ \\
            \If{$v_a \not\in Q$ and $v_a \not\in explored$}{
              $Q \leftarrow Q \cup \{v_a\}$ \Comment*[r]{Add them to the queue if unexplored}
            }
        }
    }
  \caption{Graph search via $A^*$} \label{alg:astar}
  \end{algorithm}

\newpage
\section{Experimental evaluation of anchor set selection strategies}\label{app:experiments}

\begin{figure}[t]
	\centering
	\subfloat[MC-Lipschitz]{
    		 \centering
         \includegraphics[width=0.32\linewidth]{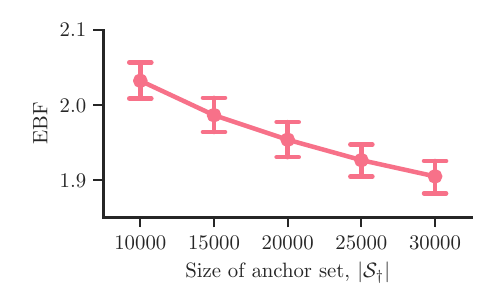}
    }
    	\subfloat[MC-Uniform]{
    		 \centering
    		 \includegraphics[width=0.32\linewidth]{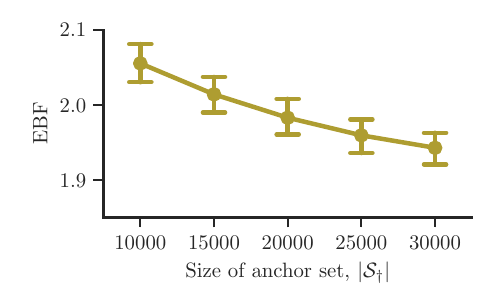}
    	}
    	\subfloat[Facility-Location]{
    		 \centering
    		 \includegraphics[width=0.32\linewidth]{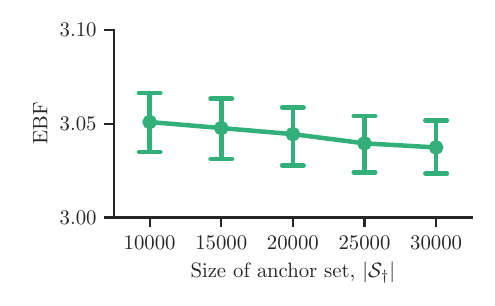}
    	}
     \caption{Computational efficiency of our method, measured by means of the Effective Branching Factor (EBF), under three different anchor 
     set selection strategies against the size of the anchor set $\Scal_\dagger$.
    In all panels, we set $L_h=1.0$, $L_\phi=0.1$ and $k=3$. 
    Error bars indicate $95\%$ confidence intervals over $200$ executions of the $A^*$ algorithm for $200$ patients with horizon $T=12$.
    }
    \label{fig:anchor_size}
\end{figure}
 
In this section, we benchmark the anchor set selection strategy presented in Section~\ref{sec:astar} against two alternative competitive 
strategies using the sepsis management dataset and the same experimental setup as in Section~\ref{sec:real}.
%
More specifically, we consider the following anchor set selection strategies:
\begin{itemize}
  \item[(i)] \emph{MC-Lipschitz}: This is the strategy described in depth in Section~\ref{sec:astar}, based on Monte Carlo simulations of counterfactual episodes under randomly sampled counterfactual action sequences.
  Notably, the time steps where each counterfactual action sequence differs from the observed one are sampled proportionally to the respective Lipschitz constant $L_t$ of the SCM's transition mechanism.
  %
  To ensure a fair comparison with other strategies, instead of controlling the number of sampled action sequences $M$, we fix the desired size of the anchor set $\Scal_\dagger$, and we repeatedly sample counterfactual action sequences until the specified size is met.
  \item[(ii)] \emph{MC-Uniform}: This strategy is a variant of the previous strategy where we sample the time steps where each counterfactual action sequence differs from the observed one uniformly at random, rather than biasing the sampling towards time steps with higher Lipschitz constants $L_t$.
  \item[(iii)] \emph{Facility-Location}: Under this strategy, the anchor set is the solution to a minimax facility location problem defined using the observed 
  available data.
  Let $\Scal_o$ be the union of all state vectors observed in all episodes $\tau$ in a given dataset.
  Then, we choose an anchor set $\Scal_\dagger \subset \Scal_o$ of fixed size $|\Scal_\dagger| = b$, such that the maximum distance of any point in $\Scal_o$ to its closest point in $\Scal_\dagger$ is minimized.
  Here, the rationale is that counterfactual states resulting from counterfactual action sequences for one observed episode are 
  likely to be close to the observed states of some other episode in the data.
  Formally,
  \begin{equation}
      \Scal_\dagger = \argmin_{\Scal' \subset \Scal_o: |\Scal'|=b} \left\{\max_{\sbb\in\Scal_o} \min_{\sbb'\in\Scal'} \left\{||\sbb - \sbb'||\right\}\right\}.    
  \end{equation}
  Although the above problem is known to be NP-Complete, we find a solution using the farthest-point clustering algorithm, which is known to have an approximation factor equal to $2$ and runs in polynomial time.
  The algorithm starts by adding one point from $\Scal_o$ to $\Scal_\dagger$ at random.
  Then, it proceeds iteratively and, at each iteration, it adds to $\Scal_\dagger$ the point from $\Scal_o$ that is the furthest from all points already in $\Scal_\dagger$, \ie, $\Scal_\dagger = \Scal_\dagger \cup \sbb$, where $\sbb=\max_{\sbb'\in\Scal_o} \left\{\min_{\sbb_\dagger \in \Scal_\dagger} || \sbb' - \sbb_\dagger ||  \right\}$.
  The algorithm terminates after $b$ iterations.
\end{itemize}

\xhdr{Results} We compare the computational efficiency of our method under each of the above anchor set selection strategies 
for various values of the size of the anchor set $|\Scal_\dagger|$.
Figure~\ref{fig:anchor_size} summarizes the results.
We observe that the Facility-Location selection strategy performs rather poorly compared to the other two strategies, achieving an effective branching factor (EBF) higher than $3$.
In contrast, the MC-Lipschitz and MC-Uniform strategies achieve an EBF close to $2$, which decreases rapidly as the size of the anchor set increases.
Among these two strategies, the MC-Lipschitz strategy, which we use in our experiments in Section~\ref{sec:real}, achieves the lowest EBF.

\section{Additional details on the experimental setup}\label{app:details}


\subsection{Features and actions in the sepsis management dataset}

As mentioned in Section~\ref{sec:real}, our state space is $\Scal=\RR^D$, where $D=13$ is the number of features.
We distinguish between three types of features: (i) demographic features, whose values remain constant across time, (ii) contextual features, for which we maintain their observed (and potentially varying) values throughout all counterfactual episodes and, (iii) time-varying features, whose counterfactual values are given by the SCM $\Ccal$.
The list of features is as follows:
\begin{itemize}
  \item Gender (demographic)
  \item Re-admission (demographic)
  \item Age (demographic)
  \item Mechanical ventilation (contextual)
  \item FiO$_2$ (time-varying)
  \item PaO$_2$ (time-varying)
  \item Platelet count (time-varying)
  \item Bilirubin (time-varying)
  \item Glasgow Coma Scale (time-varying)
  \item Mean arterial blood pressure (time-varying)
  \item Creatinine (time-varying)
  \item Urine output (time-varying)
  \item SOFA score (time-varying)
\end{itemize}

\begin{table}[t]
  \caption{Levels of vasopressors and intravenous fluids corresponding to the $25$ actions in $\Acal$}
  \begin{center}
  \begin{tabular}{ |c|c| } 
   \hline
   \textbf{Vasopressors (mcg/kg/min)} & \textbf{Intravenous fluids (mL/4 hours)} \\ \hline 
   $0.00$ & $0$ \\ \hline
   $0.04$ & $30$ \\ \hline
   $0.113$ & $80$ \\ \hline
   $0.225$ & $279$ \\ \hline
   $0.788$ & $850$ \\
   \hline
  \end{tabular}
  \end{center}
  \end{table}\label{tab:actions}

To define our set of actions $\Acal$ we follow related work~\cite{raghu2017deep, komorowski2018artificial, killian2020empirical}, and we consider $25$ actions corresponding to $5\times 5$ levels of administered vasopressors and intravenous fluids.
Specifically, for both vasopressors and fluids, we find all non-zero values appearing in the data, and we divide them into $4$ intervals based on the quartiles of the observed values.
Then, we set the $5$ levels to be the median values of the $4$ intervals and $0$.
Table~\ref{tab:actions} shows the resulting values of vasopressors and fluids.

\subsection{Additional details on the network architecture \& training}

\begin{figure}[t]
	\centering
  \includegraphics[width=0.6\textwidth]{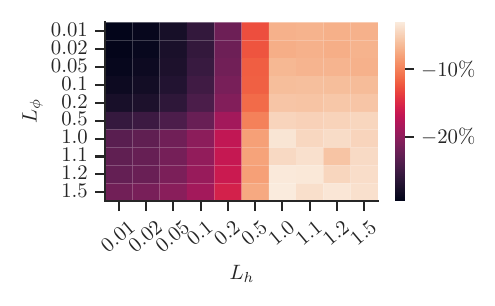}
  \caption{Goodness of fit of the Lipschitz-continuous SCM $\Ccal$, measured by means of the percentage decrease in log likelihood of the data in comparison with an SCM trained without Lipschitz-continuity constraints.
  The x and y axes correspond to different enforced values for the Lipschitz constants $L_h$, $L_\phi$ of the location and scale networks $h$ and $\phi$, respectively.
  Darker values indicate that the learned SCM achieves a significantly lower log likelihood than the unconstrained SCM.
  } 
  \label{fig:loss}
\end{figure}

We represent the location and scale functions $h$ and $\phi$ of the SCM $\Ccal$ using neural networks with $1$ hidden layer, $200$ hidden units and $tanh$ activation functions.
The mapping from a state $\sbb$ and an action $a$ to the hidden vector $\zb$ takes the form $\zb=tanh(W_s \sbb + W_a \ab)$, where $\ab$ is a $2$-D vector representation of the respective action.
The mapping from the hidden vector $\zb$ to the network's output is done via a fully connected layer with weights $W_z$.
To enforce a network to have a Lipschitz constant $L$ with respect to the state input, we apply spectral normalization to the weight matrices $W_s$ and $W_z$, so that their spectral norms are $\norm{W_s}_2=\norm{W_z}_2=1$.
Additionally, we add $2$ intermediate layers between the input and the hidden layer and between the hidden layer and the output layer, each one multiplying its respective input by a constant $\sqrt{L}$.
Since it is known that the $tanh$ activation function has a Lipschitz constant of $1$, it is easy to see that, by function composition, the resulting network is guaranteed to be Lipschitz continuous with respect to its state input with constant $L$.
Note that, since the matrix $W_a$ is not normalized, the network's Lipschitz constant with respect to the action input can be arbitrary.

To train the SCM $\Ccal$, for each sample, we compute the negative log-likelihood of the observed transition under the SCM's current parameter values (\ie, network weight matrices \& covariance matrix of the multivariate Gaussian prior), and we use that as a loss.
Subsequently, we optimize those parameters using the Adam optimizer with a learning rate of $0.001$, a batch size of $256$, and we train the model for $100$ epochs.

We train the model under multiple values of the Lipschitz constants $L_h$, $L_\phi$ of the location and scale networks, and we evaluate the log-likelihood of the data under each model using $5$-fold cross-validation.
%
Specifically, for each configuration of $L_h$ and $L_\phi$, we randomly split the dataset into a training and a validation set (with a size ratio $4$-to-$1$), we train the corresponding SCM using the training set, and we evaluate the log-likelihood of the validation set based on the trained SCM.
This results in the log-likelihood always being measured on a different set of data points than the one used for training.
For each configuration of $L_h$ and $L_\phi$, we repeat the aforementioned procedure $5$ times and we report the average log-likelihood achieved on the validation set.
%
In addition, we train an unconstrained model without spectral normalization, which can have an arbitrary Lipschitz constant.

Figure~\ref{fig:loss} shows the decrease in log-likelihood of the respective constrained model as a percentage of the log-likelihood achieved by the unconstrained model, under various values of the Lipschitz constants $L_h$, $L_\phi$.
We observe that, the model's performance is predominantly affected by the Lipschitz constant of the location network $L_h$, and its effect is more pronounced when $L_h$ takes values smaller than $1$.
Additionally, we can see that the scale network's Lipschitz constant $L_\phi$ has a milder effect on performance, especially when $L_h$ is greater or equal than $1$.
Since we are interested in constraining the overall Lipschitz constant of the SCM $\Ccal$, in our experiments in Section~\ref{sec:real}, we set $L_h=1$ and $L_\phi=0.1$, which achieves a log-likelihood only $6\%$ lower to that of the best model trained without any Lipschitz constraint.

\end{document}